\documentclass[sn-mathphys-num]{sn-jnl}


\usepackage{graphicx}%
\usepackage{multirow}%
\usepackage{amsmath,amssymb,amsfonts}%
\usepackage{amsthm}%
\usepackage{mathrsfs}%
\usepackage[title]{appendix}%
\usepackage{textcomp}%
\usepackage{manyfoot}%
\usepackage{booktabs}%
\usepackage{algorithm}%
\usepackage{algorithmicx}%
\usepackage{algpseudocode}%
\usepackage{listings}%

\usepackage{bbm}
\usepackage[table]{xcolor}


\theoremstyle{thmstyleone}%
\newtheorem{theorem}{Theorem}
%

\theoremstyle{thmstyletwo}%

\theoremstyle{thmstylethree}%
%


\newenvironment{assumptionp}[1]{
  
  \assumptionalt
}{\endassumptionalt}

\raggedbottom

\begin{document}

\title[Article Title]{A generalized approach to label shift: the Conditional Probability Shift Model}

\author*[1,2]{\fnm{Paweł} \sur{Teisseyre}}\email{teisseyrep@ipipan.waw.pl}
\equalcont{These authors contributed equally to this work.}

\author[1,2]{\fnm{Jan} \sur{Mielniczuk}}\email{miel@ipipan.waw.pl}
\equalcont{These authors contributed equally to this work.}

\affil*[1]{\orgdiv{Institute of Computer Science}, \orgname{Polish Academy of Sciences}, \orgaddress{\street{Jana Kazimierza 5}, \city{Warsaw}, \postcode{01-248}, \country{Poland}}}

\affil[2]{\orgdiv{Faculty of Mathematics and Information Science}, \orgname{Warsaw University of Technology}, \orgaddress{\street{Koszykowa 75}, \city{Warsaw}, \postcode{00-662}, \country{Poland}}}


\abstract{
In many practical applications of machine learning, a discrepancy often arises between a source distribution from which labeled training examples are drawn and a target distribution for which only unlabeled data is observed. Traditionally, two main scenarios have been considered to address this issue: covariate shift (CS), where only the marginal distribution of features changes, and label shift (LS), which involves a change in the class variable's prior distribution. However, these frameworks do not encompass all forms of distributional shift.
This paper introduces a new setting, Conditional Probability Shift (CPS), which captures the case when the conditional distribution of the class variable given some specific  features changes while the distribution of remaining features given the specific features and the class is preserved. For this scenario we present the Conditional Probability Shift Model (CPSM) based on modeling the class variable's conditional probabilities using multinomial regression.
Since the class variable is not observed for the target data, the parameters of the multinomial model for its distribution are estimated using the Expectation-Maximization algorithm. The proposed method is generic and  can be combined with any probabilistic classifier.
The effectiveness of CPSM is demonstrated through experiments on synthetic datasets and a case study using the MIMIC medical database, revealing its superior balanced classification accuracy on the target data compared to existing methods, particularly in situations situations of conditional distribution shift and no apriori distribution shift, which are not detected by LS-based methods.
}

\keywords{distribution shift, conditional probability shift, label shift, sparse joint shift, Expectation-Maximization algorithm}



\maketitle

\section{Introduction}
\subsection{Motivation}
In many real world applications aiming at classification of objects, the source distribution $P$, from which we sample labeled training examples, with a label denoting a class variable,  differs from the target distribution $Q$, where only   unlabeled data  is observed \cite{Zadrozny2004, Davidetal2010, WIDLS2021}.
This situation occurs, for example, in medical applications, where, models trained on patients from one hospital do not necessarily adequately  generalize to new data from other hospitals \cite{Zechetal2018, StackeEilertsenUngerLundstrom2021}.
The shift in the distribution of diseases and/or patient parameters may be related, among other factors, to socio-economic differences in the studied populations or outbreak of an epidemic \cite{Rolandaetal2022}. Also, in banking, distribution of credit repayments and defaults may change in time \cite{CreditRiskDA}.

In general, it is not possible to make inferences for the target data based on information obtained solely from the source dataset \cite{Davidetal2012}, and further assumptions are needed to enable training the classifier. In prior works two scenarios were considered: covariate shift (CS) and label shift (LS).
Under the covariate shift (CS) scheme \cite{HuangGrettonBorgwardtScholkopfSmola2006, BickelBrucknerScheffer2009}, we assume that only the marginal distribution of features changes, and the posterior distribution remains unchanged. 
Label shift (LS) \cite{LiptonWangSmola2018, SaerensLatinneDecaestecker2002, AlexandariKundajeShrikumar2020, AzizzadenesheliLiuYangAnandkumar2019, Gargetal2020} stipulates that the prior distribution of the class variable changes, while the conditional distribution of features given labels remains the same for the source and target datasets.
LS aligns with the anti-causal setting in which
the class variable determines the features, e.g. in medical applications where symptoms  are caused by disease \cite{Scholkopfetal2012, LiptonWangSmola2018}. The LS assumption indicates that the distribution of symptoms in the presence of the disease is the same in the source and target data.

Obviously, the above two scenarios  do not cover all possibilities of distribution shifts. 
Recently a novel  SJS (Sparse Joint Shift) assumption \cite{ChenZahariaZou2024, Tasche2024} has been introduced, which is a generalization of label shift and sparse covariate shift. Namely, it is assumed that distribution of class variable and some features change, but the remaining features’ distribution conditional on the shifted features and class variable is fixed.

 SJS encompasses, as a special case   the shift in  the marginal distribution of the chosen features only. This situation, however,  is of minor interest for the classification problem because, as we  show in the following, the class posterior probabilities for source and the target datasets  match in this case. Much more interesting and demanding is the shift of the conditional distributions of the class variable given some features. Motivated by this, in this paper we introduce a new setting called CPS (conditional probability shift), which covers the most interesting case within the SJS framework.

In particular, we can observe a shift in the conditional distribution  of the class variable given some features, while its   marginal (prior) distribution  is preserved, which means that there is no label shift.
To stress this important point, we discuss  the following example (see Figure \ref{fig:motivation_example}) in which the class variable is the occurrence of a disease, and additionally we consider the age variable with two categories: '$\leq $40 years' and '$>$40 years'. For the source data, the probability of succumbing to the  disease in the group of older people (equal $0.4$) is twice as high as in the group of younger people (equal $0.2$). In the target data, on the other hand, the probability increases
in the older group to $0.5$, while in the younger group it equals $0.1$. Therefore, we observe a change in the conditional distribution of the disease, while its prior distribution, in the case where both ages of the group are equally likely,  remains the same for the source and target data and is equal $0.3$. The above situation may occur when the target data refer to a population in which disease prevention for the elderly is at a low level compared to the source population.

\begin{figure}
\centering
    \begin{tabular}{c c}
      \includegraphics[width=0.4\textwidth]{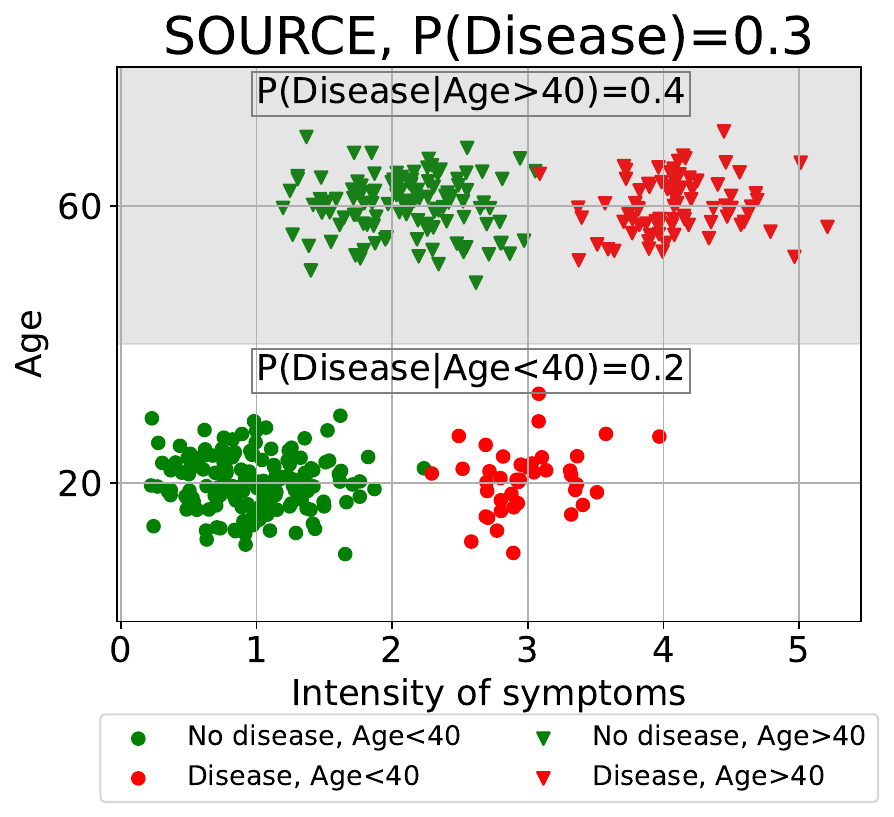} &
      \includegraphics[width=0.4\textwidth]{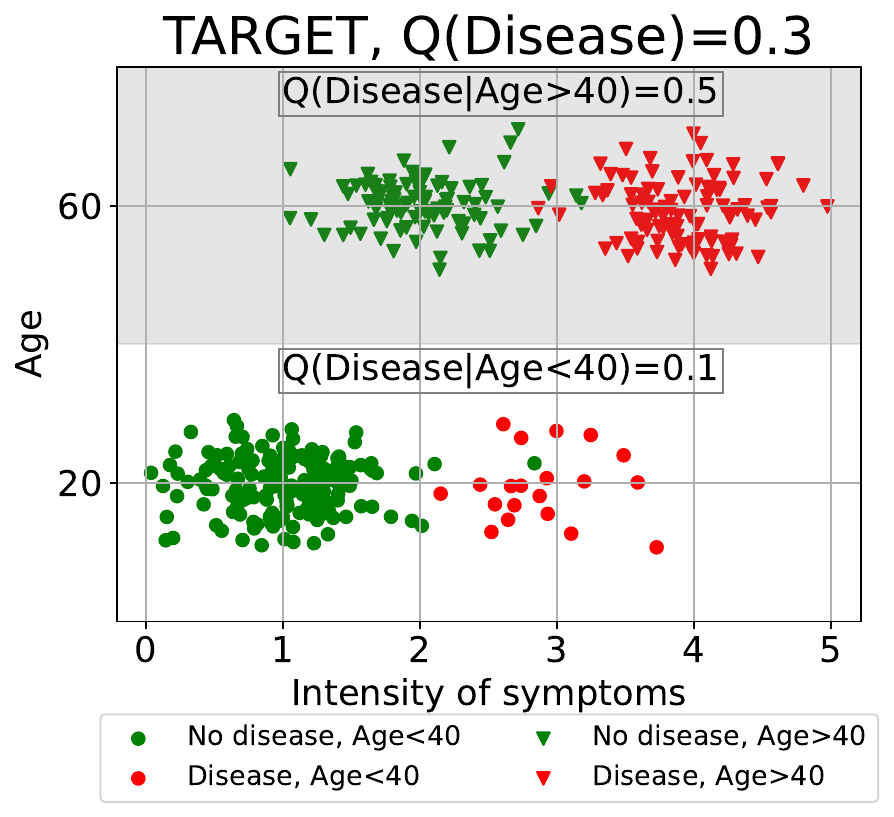}  \\
    \includegraphics[width=0.4\textwidth]{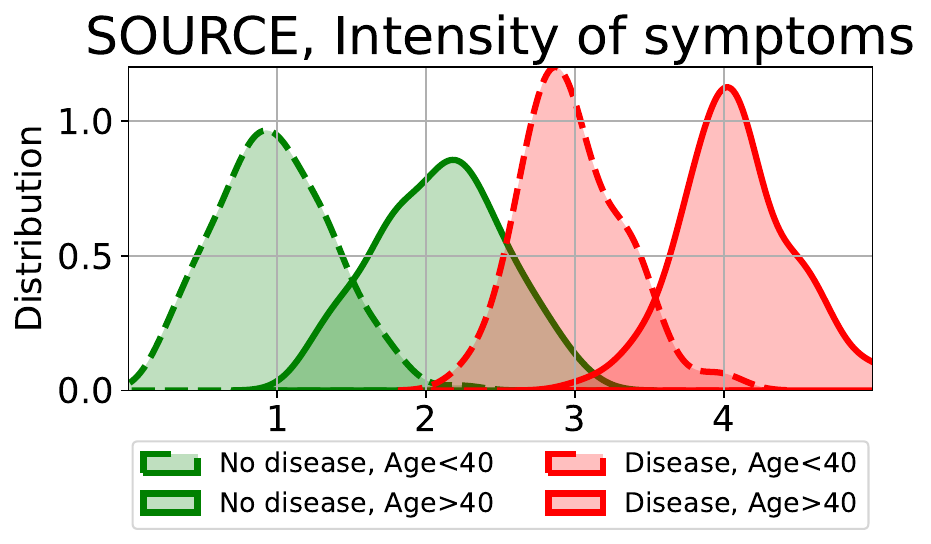} &
      \includegraphics[width=0.4\textwidth]{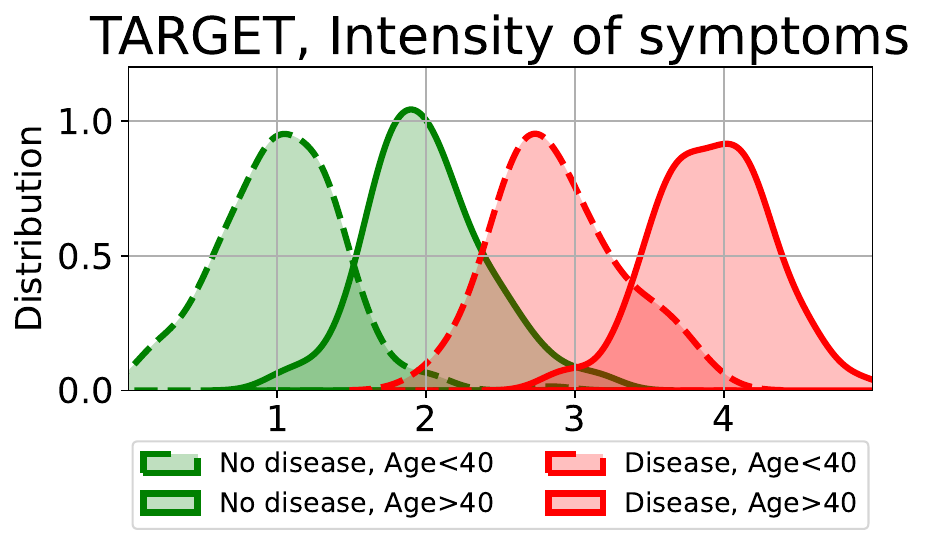}  \\
      \end{tabular}
    \caption{Example of a shift in the conditional distribution of a class variable denoting the occurrence of a disease given  the age (1st row) : $P(Disease|Age)\neq Q(Disease|Age)$. The prior distribution of the disease is the same for the target and source distributions $P(Disease)=Q(Disease)$ (no label shift). The  empirical conditional distribution of the feature (intensity of symptoms) given disease and age are approximately the same for source and target data up to estimation error (2nd row).}
    \label{fig:motivation_example}
\end{figure}

\subsection{Contribution}
In this work, we propose a new model, called CPSM (Conditional Probability Shift Model) that relies on the CPS assumption. The main idea of the method is to model the conditional probability of a class variable given some of the features using multinomial models for the source and target data. Estimating the model parameters for the target data is  challanging because we do not observe the class variable. Therefore, we use the EM algorithm, which allows us to iteratively estimate the parameters of the multinomial model as well as the posterior probability of the class on the target data.

Experiments conducted on artificial data as well as a case study on the medical MIMIC database show that the method provides higher classification accuracy than competing methods dedicated to LS and SJS schemes.
In particular, there is a significant advantage over methods dedicated to the LS scheme (such as BBSC or MLLS described below) in the case when the conditional distribution  is shifted but the prior probability of the target variable does not change.

\subsection{Related work}
The shift of distributions between source and target data is an important problem in machine learning and therefore in recent years much attention has been paid to detecting such a situation \cite{Rabanser2019, Kulinski2020} and adapting models based on the source data \cite{Cortes2008, Sugiyama2007, LiptonWangSmola2018}.

The CS scheme involving a shift in feature probability distribution is less conceptually demanding, since the feature vector is observed for both the source and target data and posterior probabilities do  not change. The dominant approach is based on weighted risk minimization \cite{Sugiyama2007}, where the weights are usually determined using matching techniques, such as Kernel Mean Matching \cite{HuangGrettonBorgwardtScholkopfSmola2006}.

Label Shift (LS) and  a more general sparse joint shift (SJS) are the two most closely related frameworks to the scenario studied here.
Under LS assumption, there are two dominant approaches: BBSC (Black-Box Shift Correction) \cite{LiptonWangSmola2018} and MLLS (Maximum Likelihood Label Shift) \cite{SaerensLatinneDecaestecker2002}.
BBSC method introduces  importance weights that are estimated using the confusion matrix.   The weights are then  incorporated into the risk function, which is optimized on the source data to train a final classifier. 
Regularized Likelihood Label Shift (RLLS) \cite{AzizzadenesheliLiuYangAnandkumar2019} is a variant of BBSC which introduces novel regularization of weighted likelihood to compensate for the high approximation  error of the importance weights.
The approach based on weighted risk function is also used in other methods, such as the method called ExTRA \cite{MaityYurochkinBanerjeeSun2023}, where the weights are estimated using exponential tilt model and distribution matching technique.  The ExTRA method allows for covariate and  label shift to occur at the same time.
The above methods require retraining; the first model is trained on the original source dataset, and the second model is trained on the dataset with assigned sample weights. This can be computationally expensive, especially for large-scale data.
On the other hand, the MLLS method \cite{SaerensLatinneDecaestecker2002} avoids this drawback by calibrating class posterior probabilities. The method exploits the fact that the class posterior probability for the target data can be written as a transformation of the class posterior probability for the source data, with the scaling factor depending on the unknown target class prior. Since the class variable is not observable for the target data, the EM procedure is used, which allows iteratively estimating the posterior and  prior probabilities for the target data. The method proposed in this paper is based on a similar idea, but the main difference is that instead of estimating the class prior, we estimate the parameters of a multinomial model associated with the conditional probabilities. When combined with deep neural networks, the MLLS method can be further improved by using a calibration heuristic called Bias-Corrected Temperature Scaling (BCTS) \cite{AlexandariKundajeShrikumar2020}. 
In \cite{Gargetal2020}, the BBSC and MLLS are analyzed theoretically, in particular consistency conditions for
MLLS, including calibration of the classifier and a confusion matrix invertibility, are provided. LS problem is recently  modified  to allow for partial observability of the source data, e.g. in \cite{Nakajima23} it is assumed that it is Positive-Unlabeled (PU).

The SJS scheme, which generalizes LS, was proposed very recently and therefore not as many algorithms have been developed yet as for LS.
SEES \cite{ChenZahariaZou2024} is, to the best of our knowledge,  the sole method developed under SJS assumption. Discussions on the theoretical properties of the SJS scheme can be found in \cite{Tasche2024}.
SEES determines sample weights by minimizing distance between the induced feature density and the target feature density.

Finally, let's mention that there are other interesting issues related to the distribution shift.
For example, domain adaptation is a research area concerned with a   problem for which data availability  is similar to   that  discussed here. Namely, for  Unsupervised Domain Adaptation (UDA)  the source data consists of a set of labeled data and unlabeled data, the latter possibly coming from  different distribution than the labeled data. 
The task then consists of building a classifier based {\it jointly} on  those two types of source  data,   which is meant to classify new observations  following a target distribution, usually  equal to distribution of unlabeled source data. For recent advances of UDA see e.g. \cite{Liu2022}.
Another related problem is out-of-distribution (OOD) detection, where the target  unlabeled observations follow  a mixture of the source distribution and an unknown  distribution of outliers. The goal is to decide whether the given target observation is drawn from the source distribution or from the distribution of outliers, based on a classifier trained on the source data \cite{OCC, OODSurvey}.

\section{Background}

We consider a $K$-class classification problem, where each instance is described by a feature vector $(x,z)\in \mathcal{X}\times \mathcal{Z}$ and class variable (label) $y\in \mathcal{Y}$, where $\mathcal{Y}=\{1,\ldots,K\}$ is the output space.
Feature vector $z$ is considered separately  from the remaining  feature vector $x$ since  it  will determine the shift of the conditional distribution of labels as a conditioning variable.
Furthermore, let $P$ and $Q$ denote respectively the source and the target probability distributions on $\mathcal{X}\times\mathcal{Z}\times \mathcal{Y}$ and let $p(x,z,y)$ and $q(x,z,y)$ denote the corresponding probability densities or probability mass functions. It is assumed that the source and the target distributions differ.
Importantly,  for the  source dataset we observe $x,z,y$, whereas for the target dataset, we only observe feature vector $(x,z)$, whereas class $y$ is not observed. The goal is to predict $y$ in the target dataset.

There are various ways to approach distribution shift between
a source data distribution $P$ and a target data distribution $Q$ \cite{Zadrozny2004}. 
Without any assumptions  concerning the relation between $P$ and $Q$, the task is clearly impossible. The existing works focus mostly on covariate shift and label shift.
The covariate shift (CS) \cite{HuangGrettonBorgwardtScholkopfSmola2006, BickelBrucknerScheffer2009}  means that the difference in distributions of $(x,z,y)$ is due to a difference in  distribution of covariates:
$p(x,z)\neq q(x,z)$, whereas the conditional  posterior distribution is preserved, i.e. $p(y|x,z)=q(y|x,z)$.
On the other hand,  for label shift (LS) \cite{LiptonWangSmola2018, SaerensLatinneDecaestecker2002, AlexandariKundajeShrikumar2020, AzizzadenesheliLiuYangAnandkumar2019}  it is assumed that the difference in distributions results from  the difference in prior distributions: $p(y)\neq q(y)$, but the conditional distribution of the feature vector given class variable is preserved, i.e. $p(x,z|y)=q(x,z|y)$.
Recently, in \cite{ChenZahariaZou2024, Tasche2024}, more general assumption has been introduced, called Sparse Joint Shift (SJS). Under SJS, we have that $p(x,z,y)\neq q(x,z,y)$ is due to the fact that the joint distribution of label $y$ and part of the features $z$ is shifted $p(y,z)\neq q(y,z)$, but the conditional distribution of $x$ given $y$ and $z$ is preserved, i.e.,
$
p(x|y,z) = q(x|y,z).
$
 The considered extension is important from  practical point of view: consider  e.g. $y$ being a certain disease, $x$ its symptoms and $z$ an age category. Assuming that the data distribution changes in time, condition $p(y,z)\neq q(y,z)$ means that  change of occurrence of disease differs in  various  age categories, but the assumption  $p(x|y,z) = q(x|y,z)$ signifies that characteristics of symptoms is unchanged given illness and age. Figure \ref{fig:motivation_example} (bottom row) illustrates this situation. 
The distribution of symptom intensity (variable x) conditional on age (z) and disease occurence (y) remains approximately the same for the source and target data; slight differences result only from the fact that these are empirical distributions, estimated from the data.
 We stress that even in anti-causal setting it may happen that $p(x|y,z) = q(x|y,z)$ but $p(x|y) \neq  q(x|y)$ due to $p(z)\neq q(z)$.

\begin{table}[ht!]
\centering
\caption{Types of shift in joint distributions between source and target domains. CPS stands for the Conditional Probability Shift scheme considered in this paper. MS denotes marginal shift of $z$.}
\label{table:Types}
\begin{tabular}{lll}
\toprule
\textbf{Distribution shift}    & \textbf{Source domain} & \textbf{Target domain} \\
    & $p(x,z,y)$ & $q(x,z,y)$ \\
\midrule
Covariate shift \cite{HuangGrettonBorgwardtScholkopfSmola2006, BickelBrucknerScheffer2009} & $\mathbf{p(x,z)}p(y|x,z)$ & $\mathbf{q(x,z)}p(y|x,z)$\\ 
Label shift \cite{LiptonWangSmola2018, SaerensLatinneDecaestecker2002, AlexandariKundajeShrikumar2020, AzizzadenesheliLiuYangAnandkumar2019, Gargetal2020} & $\mathbf{p(y)}p(x,z|y)$ & $\mathbf{q(y)}p(x,z|y)$\\
Sparse Joint Shift \cite{ChenZahariaZou2024, Tasche2024}: & $\mathbf{p(z,y)}p(x|z,y)$ & $\mathbf{q(z,y)}p(x|z,y)$\\
-case 1 CPS  & $\mathbf{p(y|z)}p(z)p(x|z,y)$ & $\mathbf{q(y|z)}p(z)p(x|z,y)$\\
-case 2 CPS + MS  & $\mathbf{p(y|z)p(z)}p(x|z,y)$ & $\mathbf{q(y|z)q(z)}p(x|z,y)$\\
-case 3 MS  & $p(y|z)\mathbf{p(z)}p(x|z,y)$ & $p(y|z)\mathbf{q(z)}p(x|z,y)$\\
\bottomrule
\end{tabular}
\end{table}

Observe that under  SJS assumption, in view of $p(y,z)=p(y|z)p(z)$, the shift of $(y,z)$ distribution  can be associated with two  possible shifts. First, it can happen that the conditional distributions do not change, i.e. $p(y|z)=q(y|z)$, and the shift $p(z,y)$ results only from the  fact that $p(z)\neq q(z)$. However, provided $p(x|y,z)=q(x|y,z)$, such situation does not cause any new problems for classification, because then $q(y|x,z)=p(y|x,z)$. This follows e.g.  from
Theorem \ref{Thm1} below.
The second case when $p(y|z)\neq q(y|z)$ is much more interesting. So, in this paper we introduce a novel shift assumption, which  extracts  the most important case  from  SJS  assumption,  corresponding  to the change in conditional distributions.
\begin{assumptionp}{(CPS)}
Under  Conditional Probability Shift (CPS) assumption,  the difference in distributions   
$p(x,z,y)\neq q(x,z,y)$ is due to  the fact that  the conditional  probability $q(y|z)$ of label $y$ given  features $z$ is shifted : $p(y|z)\neq q(y|z)$, but the conditional distribution of the remaining features $x$ given $y$ and $z$ is preserved, i.e.
\begin{equation}
\label{CPS}
p(x|y,z) = q(x|y,z).
\end{equation}
\end{assumptionp}
Table \ref{table:Types} summarizes the different types of shift in probability distributions. Bold lettering corresponds to quantities which change for $P$ and $Q$.
We stress that  (\ref{CPS}) is assumed in both CPS and SJS introduced previously. The difference is that in SJS the shift is associated with the distribution $(y,z)$ while in CPS specifically with the conditional distribution $y$ given $z$. The CPS assumption excludes the  case of marginal interest  from the SJS assumption (case 3 in Table \ref{table:Types}) and covers more interesting situations (cases 1 and 2 in Table \ref{table:Types}). In this paper,  under  CPS setting,  the inference  is based on modelling   conditional distributions $p(y|z)$ and $q(y|z)$.
We will show in the following that (\ref{CPS}) is valuable side information which allows to perform classification despite  occurrence of  distributional  shift.

Let us also note that, the term \textit{sparse}, used in SJS,  referred to the fact that the $z$-dimension is significantly smaller than the $x$-dimension. Unlike in SJS, in our approach, this assumption is not needed and the $z$-dimension with respect to the $x$-dimension can be arbitrary, although in practice it is usually low.
Obviously, when $z$ is null vector, SJS assumption coincides with LS.


The following theorem will be crucial to our method. It shows the relationship between the posterior probabilities for the source and target domains. It is stated in \cite{Tasche2024} using different formalism and assumptions, here we state and prove it in probabilistic setting. It  generalizes the result for the LS scheme, provided in \cite{SaerensLatinneDecaestecker2002}.
\begin{theorem}
\label{Thm1}
Assume that (\ref{CPS}) holds. Then we have for $k=1,\ldots,K$
\[
q(y=k|x,z)=\frac{p(y=k|x,z)\frac{q(y=k|z)}{p(y=k|z)}}{\sum_{l=1}^{K}p(y=l|x,z)\frac{q(y=l|z)}{p(y=l|z)}}.
\]
\end{theorem}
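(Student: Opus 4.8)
The plan is to run Bayes' theorem twice---once on the target distribution and once on the source distribution---while keeping $z$ fixed as a conditioning variable throughout. In this way the whole argument reduces to the label-shift calculation of \cite{SaerensLatinneDecaestecker2002} carried out conditionally on $z$, and the conditional invariance \eqref{CPS} is the only structural input needed.

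First I would apply Bayes' rule to the target posterior, conditioning everything on $z$:
\[
q(y=k|x,z) = \frac{q(x|y=k,z)\,q(y=k|z)}{\sum_{l=1}^K q(x|y=l,z)\,q(y=l|z)},
\]
where the denominator is simply $q(x|z)$ expanded by the law of total probability over the $K$ classes. Next I would invoke the CPS assumption \eqref{CPS}, which lets me replace each $q(x|y=l,z)$ by $p(x|y=l,z)$, moving every feature-conditional onto the source distribution.

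The remaining step is to undo a second Bayes' rule, this time on the source. From $p(y=l|x,z)=p(x|y=l,z)\,p(y=l|z)/p(x|z)$ I obtain $p(x|y=l,z)=p(y=l|x,z)\,p(x|z)/p(y=l|z)$. Substituting this identity into both the numerator and every summand of the denominator, the common factor $p(x|z)$---which does not depend on the summation index $l$---cancels, leaving exactly the claimed expression in terms of the importance ratios $q(y=l|z)/p(y=l|z)$.

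The calculation itself is routine; the only point requiring care is well-definedness, so I would state at the outset the implicit positivity conditions, namely $p(y=k|z)>0$ for all $k$ (so the ratios are finite) together with $p(x|z)>0$ and $q(x|z)>0$ on the relevant support (so the posteriors exist and the cancellation is legitimate). The main conceptual point, rather than a genuine obstacle, is to notice that no assumption relating the marginals $p(y)$ and $q(y)$ is used: because $z$ is held fixed on both sides, the derivation depends solely on \eqref{CPS}, which is precisely why the formula strictly generalizes the LS result of \cite{SaerensLatinneDecaestecker2002}, recovered in the special case where $z$ is the null vector.
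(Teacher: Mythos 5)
Your proposal is correct and is essentially the paper's own argument: both apply Bayes' rule conditionally on $z$ for the target and the source, invoke (\ref{CPS}) to replace $q(x|y,z)$ by $p(x|y,z)$, and eliminate the $x$-marginal factor. The only cosmetic difference is that the paper packages the ratio $p(x|z)/q(x|z)$ as $r(x,z)$ and determines it from the normalization $\sum_k q(y=k|x,z)=1$, whereas you expand $q(x|z)$ by total probability and cancel $p(x|z)$ directly---the same computation arranged differently; your explicit positivity caveats are a sensible addition the paper leaves implicit.
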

\begin{proof}
Let us denote $r(x,z):=\frac{p(x|z)}{q(x|z)}$. Using Bayes' theorem and assumption (\ref{CPS}) we get
\begin{align*}
q(y=k|x,z)=&\frac{q(x|z,y=k)q(y=k|z)}{q(x|z)} =  \frac{p(x|z,y=k)q(y=k|z)}{q(x|z)}  \\
=& \frac{p(y=k|x,z)p(x|z)q(y=k|z)}{p(y=k|z)q(x|z)} = p(y=k|x,z)\cdot r(x,z)\cdot \frac{q(y=k|z)}{p(y=k|z)}.
\end{align*}
Then using the fact that $\sum_{k=1}^{K}q(y=k|x,z)=1$, summation  over $k$ of the above expression yields
\[
r(x,z)=\left[\sum_{l=1}^{K}p(y=l|x,z)\frac{q(y=l|z)}{p(y=l|z)}\right]^{-1},
\]
which ends the proof.
\end{proof}
The theorem justifies why considering the CPS scheme is important within the general sparse joint shift situation.
It indicates that provided (\ref{CPS}) holds, in order to recover the class posterior $q(y|x,z)$, we need only to estimate  quantities $p(y|x,z)$, $p(y|z)$ and $q(y|z)$ but neither $p(z)$ or $q(z)$. This is then  used to classify observations on the target set based on the standard Bayes rule:
\begin{equation}
\label{BayesRule}
\hat{y}=\arg\max_{k} q(y=k|x,z)    
\end{equation}
or its variants, taking into account imbalance of the classes.



\section{Modeling shifted conditional distribution of labels: EM approach}
We propose to model the conditional probabilities $p(y=k|z)$ and $q(y=k|z)$ using a multinomial regression models. 
Estimation of the first probability $p(y=k|z)$ is straightforward  using multinomial regression because we observe both $z$ and $y$ for the source data. However, it is not evident how to estimate $q(y=k|z)$,   as class indicator $y$ is not observed for target data.
We consider a family of  soft-max functions indexed by $\theta$:
\begin{equation}
\label{CPSM}
q_{\theta}(y=k|z)=\frac{\exp(\theta_{k,0} + z^{T}\theta_{k})}{1+\sum_{l=1}^{K-1}\exp(\theta_{l,0} + z^{T}\theta_{l})},\quad 1\leq k< K,
\end{equation}
where $\theta=(\theta_{1,0},\ldots,\theta_{K-1,0},\theta_1,\ldots,\theta_{K-1})^{T}$ is a vector of parameters. Additionally, we have $q_{\theta}(y=K|z)=(1+\sum_{l=1}^{K-1}\exp(\theta_{l0} + z^{T}\theta_{l}))^{-1}$. 
Assume that the true probability $q(y=k|z) = q_{\theta^{*}}(y=k|z)$, for some unknown ground-truth parameter $\theta^{*}$.  
In the following, we  propose an estimation method of  $\theta^*$, which in turn will allow us to estimate $q(y=k|x,z)$ using Theorem \ref{Thm1}.
Define indicator of the $k$-th class as $y_k=I(y=k)$.
Using $q(x,z,y)=q(z)q(y|z)q(x|y,z)$   the log-likelihood function for a single observation in target population can be written as
\begin{align}
\label{LogLik}
l(\theta;x,y,z)=& \log\prod_{k=1}^{K}q_{\theta}(x,z,y=k)^{y_k}=\sum_{k=1}^{K}y_k\log\left[q_{\theta}(x,z,y=k)\right]\\
=& \sum_{k=1}^{K}y_k\log[q(z)] + \sum_{k=1}^{K}y_k\log[q_{\theta}(y=k|z)] + \sum_{k=1}^{K}y_k\log[q(x|z,y=k)] \nonumber\\
=& \log[q(z)] +\sum_{k=1}^{K-1}y_k[\theta_{k,0}+ z^{T}\theta_k]-\log\left[1+\sum_{l=1}^{K-1}\exp(\theta_{l,0}+z^{T}\theta_l)\right]\nonumber \\
+&  \sum_{k=1}^{K}y_k\log q(x|z,y=k).\nonumber
\end{align}
Note that the last term in the expression above does not depend on $\theta$ due to CPS assumption \ref{CPS}.
Since $y_{k}$ is not observed for the target data, we can treat it as a latent variable and maximize the above function using the EM algorithm.
Let us denote by $\mathcal{L}(\theta,\mathcal{D}_t)=\sum_{(x,z,y)\in \mathcal{D}_t}l(\theta;x,y,z)$ the (unobservable) log-likelihood function for all observations in the target dataset $\mathcal{D}_t$.
\begin{enumerate}
    \item {\bf Expectation step (E step):} determine  the expected value of the log likelihood function, with respect to the current conditional distribution of $y$ given $x$ and $z$ corresponding to  the current estimate of the parameter $\widehat{\theta}^{(t)}$:
\begin{equation*}
Q(\theta|\widehat{\theta}^{(t)}) = \mathbb{E}_{y\sim q_{\widehat{\theta}^{(t)}}(y|x,z)}\mathcal{L}(\theta;\mathcal{D}_t)
= \sum_{(x,z)\in\mathcal{D}_t}\sum_{k=1}^{K}q_{\widehat{\theta}^{(t)}}(y=k|x,z)\log\left[q_{\theta}(x,z,y=k)\right],   
\end{equation*}
where  $q_{\widehat{\theta}^{(t)}}$ is given  by the formula
\[
q_{\widehat{\theta}^{(t)}}(y=k|x,z):=\frac{p(y=k|x,z)\frac{q_{\widehat{\theta}^{(t-1)}}(y=k|z)}{p(y=k|z)}}{\sum_{l=1}^{K}p(y=l|x,z)\frac{q_{\widehat{\theta}^{(t-1)}}(y=l|z)}{p(y=l|z)}}
\]
which follows from Theorem \ref{Thm1}. Here, we assume that $p(y=k|x,z)$ and $p(y=k|z)$ are known. Obviously, in practice they have to be estimated. In the proposed method, we first estimate $p(y=k|x,z)$ and $p(y=k|z)$ using source data, and then apply EM procedure using target data.
\item 
{\bf Maximization step (M-step):} find the parameters that maximize
\begin{align*}
&\widehat{\theta}^{(t+1)} = \arg\max_{\theta} Q(\theta|\widehat{\theta}^{(t)}) \\
&=\arg\max_{\theta}\sum_{(x,z)\in\mathcal{D}_t}\left[\sum_{k=1}^{K-1}q_{\widehat{\theta}^{(t)}}(y=k|x,z)[\theta_{k,0}+ z^{T}\theta_k]-\log[1+\sum_{l=1}^{K-1}\exp(\theta_{l,0}+z^{T}\theta_l)]\right],
\end{align*}
\end{enumerate}
where the second equality follows from the reasoning in (\ref{LogLik}).
Thus, in M-step instead of maximization of log-likelihood which is unobservable, we optimize its
expected value with respect to $y$, given current value of the estimated parameter.
The above optimization problem is concave and thus can be solved using standard gradient methods such as SGD or ADAM.
The following theorem states  the main property of EM algorithm, namely  that at each step of the EM procedure  the value of the marginal log-likelihood function 
for the observed data $(x,z)$ does not decrease.  In order to facilitate reading, we give the proof for  our setting  when $y$ is latent variable.
\begin{theorem}
Let $l_{\textrm{obs}}(\theta;x,z)=\log q_{\theta}(x,z) = \log\sum_{k}q(z)q_{\theta}(y=k|z)q(x|y=k,z)$ and $\mathcal{L}_{\textrm{obs}}(\theta,D_t)=\sum_{(x,z)\in\mathcal{D}_t}l_{\textrm{obs}}(\theta;x,z)$ The following inequality holds 
\[
\mathcal{L}_{\textrm{obs}}(\widehat{\theta}^{(t+1)},\mathcal{D}_t)\geq \mathcal{L}_{\textrm{obs}}(\widehat{\theta}^{(t)},\mathcal{D}_t)
\]
\end{theorem}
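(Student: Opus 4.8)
The plan is to follow the classical Dempster--Laird--Rubin monotonicity argument, specialized to the present setting in which $y$ is the latent variable. The starting point is the factorization $q_{\theta}(x,z,y=k) = q_{\theta}(x,z)\,q_{\theta}(y=k|x,z)$, which after taking logarithms gives, for every $k$, the identity $l_{\textrm{obs}}(\theta;x,z) = \log q_{\theta}(x,z,y=k) - \log q_{\theta}(y=k|x,z)$. Since the left-hand side does not depend on $k$, I would take the expectation of both sides with respect to the E-step distribution $y\sim q_{\widehat{\theta}^{(t)}}(y|x,z)$ and sum over the target dataset. This produces the decomposition
\[
\mathcal{L}_{\textrm{obs}}(\theta;\mathcal{D}_t) = Q(\theta|\widehat{\theta}^{(t)}) - H(\theta|\widehat{\theta}^{(t)}),
\]
where $Q(\theta|\widehat{\theta}^{(t)})$ is exactly the function built in the E-step and $H(\theta|\widehat{\theta}^{(t)}) := \sum_{(x,z)\in\mathcal{D}_t}\sum_{k=1}^{K} q_{\widehat{\theta}^{(t)}}(y=k|x,z)\log q_{\theta}(y=k|x,z)$ is a cross-entropy-type term depending on $\theta$ only through the conditional law $q_{\theta}(y|x,z)$.

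With this decomposition in hand, I would write the increment across one iteration as a sum of two differences,
\[
\mathcal{L}_{\textrm{obs}}(\widehat{\theta}^{(t+1)};\mathcal{D}_t) - \mathcal{L}_{\textrm{obs}}(\widehat{\theta}^{(t)};\mathcal{D}_t) = \left[Q(\widehat{\theta}^{(t+1)}|\widehat{\theta}^{(t)}) - Q(\widehat{\theta}^{(t)}|\widehat{\theta}^{(t)})\right] + \left[H(\widehat{\theta}^{(t)}|\widehat{\theta}^{(t)}) - H(\widehat{\theta}^{(t+1)}|\widehat{\theta}^{(t)})\right],
\]
and argue that each bracket is nonnegative. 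The first bracket is nonnegative directly by the definition of the M-step: $\widehat{\theta}^{(t+1)}$ maximizes $Q(\cdot|\widehat{\theta}^{(t)})$ and therefore does no worse than $\widehat{\theta}^{(t)}$. The second bracket carries the actual content, and I would handle it by showing that $H(\cdot|\widehat{\theta}^{(t)})$ is maximized at $\theta=\widehat{\theta}^{(t)}$. Concretely,
\[
H(\widehat{\theta}^{(t)}|\widehat{\theta}^{(t)}) - H(\theta|\widehat{\theta}^{(t)}) = \sum_{(x,z)\in\mathcal{D}_t}\sum_{k=1}^{K} q_{\widehat{\theta}^{(t)}}(y=k|x,z)\log\frac{q_{\widehat{\theta}^{(t)}}(y=k|x,z)}{q_{\theta}(y=k|x,z)},
\]
which is a sum over $\mathcal{D}_t$ of Kullback--Leibler divergences between the conditional laws $q_{\widehat{\theta}^{(t)}}(\cdot|x,z)$ and $q_{\theta}(\cdot|x,z)$.

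The main (and essentially only nontrivial) obstacle is thus the nonnegativity of this entropy difference, which I expect to settle via Gibbs' inequality: the Kullback--Leibler divergence between two probability vectors is always nonnegative and vanishes when they coincide, which is itself a one-line consequence of Jensen's inequality applied to the concave logarithm. Evaluating the displayed identity at $\theta=\widehat{\theta}^{(t+1)}$ then shows the second bracket is $\geq 0$. Since both brackets are nonnegative, the increment is nonnegative and the claimed monotonicity $\mathcal{L}_{\textrm{obs}}(\widehat{\theta}^{(t+1)};\mathcal{D}_t)\geq \mathcal{L}_{\textrm{obs}}(\widehat{\theta}^{(t)};\mathcal{D}_t)$ follows. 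I would also remark that $\theta$ enters $q_{\theta}(y|x,z)$ only through the factor $q_{\theta}(y|z)$ via Theorem \ref{Thm1}, so the marginal $q(z)$ and the preserved conditional $q(x|y,z)$ cancel in every difference above and never need to be modeled, consistent with the observation following Theorem \ref{Thm1}.
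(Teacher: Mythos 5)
Your proposal is correct and follows essentially the same route as the paper's own proof: both decompose $\mathcal{L}_{\textrm{obs}}(\theta,\mathcal{D}_t)$ as $Q(\theta|\widehat{\theta}^{(t)})$ minus the expected log-conditional term, use the M-step maximization for the first difference, and Gibbs' inequality (nonnegativity of the Kullback--Leibler divergence) for the second. Your closing remark that $\theta$ enters only through $q_{\theta}(y|z)$ is a nice addition but changes nothing essential.
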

\begin{proof}
First note that
$
l_{\textrm{obs}}(\theta;x,z) =\log q_{\theta}(x,y,z) - \log q_{\theta}(y|x,z).
$
Calculating  the expected value $\mathbb{E}_{y\sim q_{\widehat{\theta}^{(t)}}(y|x,z)}$ of  both sides and summing over all $(x,z)\in\mathcal{D}_t$ we get
\[
\mathcal{L}_{\textrm{obs}}(\theta,\mathcal{D}_t)=Q(\theta|\widehat{\theta}^{(t)})  - \sum_{(x,z)\in\mathcal{D}_t}\mathbb{E}_{y\sim q_{\widehat{\theta}^{(t)}}(y|x,z)}\log q_{\theta}(y|x,z).
\]
Using the above equality, the assertion follows from
\begin{align*}
&\mathcal{L}_{\textrm{obs}}(\widehat{\theta}^{(t+1)},\mathcal{D}_t)-\mathcal{L}_{\textrm{obs}}(\widehat{\theta}^{(t)},\mathcal{D}_t)=
Q(\widehat{\theta}^{(t+1)}|\widehat{\theta}^{(t)})-Q(\widehat{\theta}^{(t)}|\widehat{\theta}^{(t)}) \\
&+ \sum_{(x,z)\in\mathcal{D}_t}\mathbb{E}_{y\sim q_{\widehat{\theta}^{(t)}}(y|x,z)}\log q_{\widehat{\theta}^{(t)}}(y|x,z) - \sum_{(x,z)\in\mathcal{D}_t}\mathbb{E}_{y\sim q_{\widehat{\theta}^{(t)}}(y|x,z)}\log q_{\widehat{\theta}^{(t+1)}}(y|x,z)\geq 0,
\end{align*}
where the non-negativity of the first term follows from the fact that $\widehat{\theta}^{(t+1)}$ maximizes $Q(\theta|\widehat{\theta}^{(t)})$ (M-step), and the non-negativity of the second term follows from the Gibbs inequality $\sum_{k}p_k\log p_k\geq \sum_{k}p_k\log q_k$, for discrete probability distributions $p_k,q_k$. 

\end{proof}
If optimisation in M-step is performed over a a bounded set, it follows from Theorem 2 in \cite{Wu1983} that all limit points $\theta^*$ of the  sequence  $(\widehat{\theta}^{(t)})$ are stationary points i.e. $ \nabla \mathcal{L}_{\textrm{obs}}(\theta^*,\mathcal{D}_t)=0$.\\
As discussed above, application of the EM procedure is preceded by the estimation of $p(y=1|z)$ and $p(y=1|z,x)$ based on the source data. We assume that $p(y=1|z)$ is modeled using multinomial logistic regression (MLR), which is natural since $z$ is usually a low-dimensional vector containing only a few variables. The question arises what model we should use to estimate $p(y=1|z,x)$? Of course, allowing for incorrect specification, we can use any classification model which yields estimates of posterior probabilities, but the following Theorem shows that, with the additional assumption that the distribution of $x$ given features $z$ and label $y$ is Gaussian, the adequate choice is also MLR. In view of the theorem, we use MLR as a basic model in our experiments to estimate $p(y=1|x,z)$, although other models such as deep neural networks are also applied. 

\begin{theorem}
\label{thm_normal}
Consider class variable $y\in\{1,\ldots,K\}$, $z\in R^d$ and assume  that 
$
p(y=1|z)~=\exp(\omega^*_{k0}+z^{T}\omega^*_{k})/[1+\sum_{l=1}^{K-1}\exp(\omega^*_{l0}+z^{T}\omega^*_{l})],
$
for $1\leq k\leq K-1$, where $\omega^*_{k0}, \omega^*_{k}$ are unknown ground-truth parameters. Moreover, assume that conditional distribution of $x$ given $y,z$ is $\mathcal{N}(Mz+a_ky_k,I)$, where $ M\in R^{p\times d}, a_k\in R^{p}$ and $y_k=I(y=k)$. Then the posterior probability of $y$, given $x,z$ is also described by softmax function
\[
P(y=k|x,z)=\frac{\exp[x^{T}(a_k-a_K)+z^{T}(M^{T}(a_K-a_k)+\omega^*_{k})+c]}{1+\sum_{l=1}^{K-1}\exp[x^{T}(a_l-a_K)+z^{T}(M^{T}(a_K-a_l)+\omega^*_{k})+c]}
\]
where constant $c=0.5(a_{K}^{T}a_K-a_k^{T}a_k)+\omega^*_{k0}$.
\end{theorem}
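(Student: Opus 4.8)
The plan is to compute $P(y=k|x,z)$ directly from Bayes' theorem and verify that the resulting expression collapses to a softmax. First I would write
\[
P(y=k|x,z)=\frac{p(x|y=k,z)\,p(y=k|z)}{\sum_{l=1}^{K}p(x|y=l,z)\,p(y=l|z)},
\]
and substitute the two ingredients supplied by the hypotheses: the Gaussian density $p(x|y=k,z)=(2\pi)^{-p/2}\exp(-\tfrac12\|x-Mz-a_k\|^2)$ and the softmax prior $p(y=k|z)$. It is convenient to adopt the convention $\omega^*_{K0}=0$, $\omega^*_K=0$, so that $p(y=K|z)=1/D$ with $D=1+\sum_{l=1}^{K-1}\exp(\omega^*_{l0}+z^T\omega^*_l)$; then all $K$ classes are treated uniformly and the normalizing factor $D$ (as well as the constant $(2\pi)^{-p/2}$) cancels between numerator and denominator.

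The key step is to expand the Gaussian exponent as
\[
-\tfrac12\|x-Mz-a_k\|^2=-\tfrac12\|x-Mz\|^2+(x-Mz)^T a_k-\tfrac12 a_k^T a_k.
\]
The quadratic term $-\tfrac12\|x-Mz\|^2$ does not depend on $k$, so it factors out of both numerator and denominator and disappears from the ratio. This cancellation is the crux of the argument: it removes all quadratic dependence on $x$ and $z$ and leaves an exponent that is affine in $x$ and in $z$, which is exactly the functional form of a multinomial logistic model.

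After cancellation, setting $\eta_k=x^T a_k-z^T M^T a_k-\tfrac12 a_k^T a_k+\omega^*_{k0}+z^T\omega^*_k$, one obtains $P(y=k|x,z)=\exp(\eta_k)/\sum_{l}\exp(\eta_l)$. I would then normalize by dividing numerator and denominator by $\exp(\eta_K)$, which turns the $l=K$ summand into the constant $1$ and produces the stated denominator $1+\sum_{l=1}^{K-1}\exp(\eta_l-\eta_K)$. Collecting the difference $\eta_k-\eta_K$ and grouping the coefficient of $x$, the coefficient of $z$, and the constant term yields
\[
\eta_k-\eta_K=x^T(a_k-a_K)+z^T\bigl(M^T(a_K-a_k)+\omega^*_k\bigr)+\Bigl[\tfrac12(a_K^T a_K-a_k^T a_k)+\omega^*_{k0}\Bigr],
\]
which is precisely the claimed exponent with $c=\tfrac12(a_K^T a_K-a_k^T a_k)+\omega^*_{k0}$.

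Since every step is an identity, there is no genuine analytic obstacle; the only care needed is bookkeeping, namely tracking which terms are class-independent (and hence cancel) and handling the reference class $k=K$ consistently via the $\omega^*_K=0$ convention. Conceptually this is the classical Gaussian-discriminant-to-logistic reduction: equal covariance (here the identity) across classes is exactly what forces the quadratic terms to cancel, guaranteeing that the posterior is exactly a softmax rather than merely an approximation, and thereby justifying the use of MLR for $p(y=k|x,z)$.
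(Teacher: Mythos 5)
Your proposal is correct and follows essentially the same route as the paper's proof: both invoke Bayes' theorem, expand the Gaussian exponents so that the class-independent quadratic term $-\tfrac12\|x-Mz\|^2$ cancels, and normalize against the reference class $K$ (the paper writes this directly as the log-odds $\log\frac{P(y=k|x,z)}{P(y=K|x,z)}$, which is exactly your $\eta_k-\eta_K$). Your explicit convention $\omega^*_{K0}=0$, $\omega^*_K=0$ and the full normalization step are merely a more detailed bookkeeping of the paper's ``simple algebraic transformations.''
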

\begin{proof}
Using Bayes' theorem and the normality assumption we get
\begin{align*}
\log\frac{P(y=k|x,z)}{P(y=K|x,z)}=& \log\frac{P(x|y=k,z)}{P(x|y=K,z)} + \log\frac{P(y=k|z)}{P(y=K|z)}  \\
= &-0.5|| x-Mz-a_k||^2  + 0.5 || x-Mz-a_K||^2 + \omega^*_{k0} + z^{T}\omega^*_k \\
= & (x-Mz)^{T}(a_k-a_K)  + z^{T}\omega^*_k + c,
\end{align*}
which after simple algebraic transformations gives the assertion.
\end{proof}



\section{Experiments}
In the experiments, we aim to address the following research questions.
1) What is the accuracy of the proposed CPSM method compared to existing methods dedicated to the LS and SJS scenarios?
2) How accurately do we approximate the posterior probability for the target data using the methods considered?
3) How does the magnitude of the shift in the conditional distributions affect the results?
4) How does the proposed method perform in a situation where we observe a shift in the conditional distributions $q(y|z)\neq p(y|z)$ but there is no change  in the marginal distributions, i.e. $q(y)=p(y)$?
5) How does the number of instances in the data and the class prior probabilities affect the performance of the method?

\subsection{Methods and evaluation}
We compare the method CPSM proposed here \footnote{The source code is available at: \url{https://github.com/teisseyrep/CPSM}} with the following baselines: ExTRA (Exponential Titling Model) \cite{MaityYurochkinBanerjeeSun2023}, SEES (Shift Estimation and Explanation under SJS) \cite{ChenZahariaZou2024}, MLLS (Maximum Likelihood Label Shift) \cite{SaerensLatinneDecaestecker2002}, BBSC (Black-Box  Shift Correction) \cite{LiptonWangSmola2018} and  also the NAIVE method in which a classification model is trained on the source data and applied to the target data without any corrections. Recall, that MLLS and BBSC are methods developed  assuming the LS scenario.

In the case of competing methods, we use implementations made publicly available by the authors. To make the comparison reliable, in all iterative methods (CPSM, MLLS, ExTRA), we set the number of iterations equal to 500. As base classifiers, we use   logistic regression model (for artificial data and MIMIC data) and also  a neural network (for MIMIC data). In the case of logistic regression, we use the implementation from the scikit-learn library~\cite{scikitlearn}, with default values of the hyper-parameters. The neural network consists of 3 layers, each of which has the number of nodes equal to the number of input features. For optimization, we apply the ADAM algorithm, the learning constant is equal to 0.0001, and the batch size is 10. We use the implementation from the PyTorch library \cite{paszke2019pytorch}. We emphasize that all described methods are generic and can be combined with any probabilistic classifier.

In the experiments, we focus on the case of binary classification, for which the classification rule has the form: $q(y=1|x,z)>t$, where $t$ is a threshold. The choice of $t=0.5$ corresponds to Bayes' rule (\ref{BayesRule}) and maximization of classification accuracy. Since in the considered data, classes are usually strongly imbalanced, instead of the accuracy score on the target data, we consider the Balanced Accuracy, the maximization of which results in  the rule $q(y=1|x,z)>q(y=1)$ (see \cite{Menon13}, Section 4), and $q(y=1)$ is estimated by the respective algorithm as an average of $\hat q(y=1|x,z)$ over all samples.
In addition we consider the Approximation Error defined as $|\hat{q}_{\text{method}}(y=1|x,z)-\hat{q}_{\text{oracle}}(y=1|x,z)|$, where $\hat{q}_{\text{method}}(y=1|x,z)$ is estimator  pertaining to the considered method, whereas $\hat{q}_{\text{oracle}}(y=1|x,z)$ is estimator which results from the model trained on target data and assuming knowledge of $y$. The Approximation Errors are averaged over all instances $(x,z)$ in target data.
\subsection{Experiments on synthetic datasets}
For  datasets generated synthetically, we can easily control the values of various parameters such as $p(y)$, $q(y)$ and the relationship between $y$ and $z$, as well as between $x$ and $(y,z)$.
In synthetic dataset 1, we consider a binary  class variable $y\in\{0,1\}$, and binary variables  $z_j\in\{0,1\}$ being 
independent Bernoulli random variables  such that $p(z_j=1)=q(z_j=1)=0.5$ and let
$z=(z_1,z_2,z_3,z_4,z_5)$. In synthetic dataset 2, $z_j\sim N(0,1)$, for $1 \leq j\leq 5$.
Moreover, in both cases, we assume that 
\[
p(y=1|z)=0.05,\quad\quad q(y=1|z)=\sigma(\theta_0+k z^{T}\mathbbm{1}),\quad \mathbbm{1}=(1,\ldots,1)^{T},
\]
where $\sigma:R\to (0,1)$ is logistic sigmoid function. For fixed value $k$, we choose $\theta_0$ to control $q(y=1)$. In total, we have 2 parameters whose values we will change in the experiment: $q(y=1)$ and $k$.
Finally, given $y$ and $z$ we generate $x$ from $10$-dimensional Gaussian distribution $\mathcal{N}(\mu,I)$, where $\mu=(y,z_1,z_2,z_3,z_4,z_5,0,\ldots,0)^T$. This corresponds to the situation described in Theorem \ref{thm_normal}. Note that, given $z$ and $y$, vector $x$ is generated in the same way for the source and target datasets, which is consistent with the assumption (\ref{CPS}). 
Importantly, we can distinguish 4 interesting cases:
\begin{enumerate}
    \item {\bf No shift}: for $p(y=1)=q(y=1)$ and $k=0$ there is no shift in probability distributions.
    \item {\bf LS}: for $p(y=1)\neq q(y=1)$ and $k=0$,  label shift (LS) occurs, which follows from 
    $p(x,z|y) = p(x|z,y)p(z|y)=q(x|z,y)p(z)=q(x|z,y)q(z)=q(x|z,y)$, as $y$ and $z$ are independent wrt $P$ and $Q$ and $p(z)=q(z)$.
    \item {\bf CPS}: for $p(y=1)=q(y=1)$ and $k\neq 0$ there is no label shift (LS), but there is shift in conditional distributions $p(y|z)\neq q(y|z)$.
    \item {\bf CPS + LS}: for $p(y=1)\neq q(y=1)$ and $k\neq 0$ there is label shift (LS), and  there is a shift in conditional distributions $p(y|z)=p(y)\neq q(y|z)$ but $p(x|y,z)=q(x|y,z)$.
\end{enumerate}
In case 1, when there is no distributional shift, as expected, all methods, including the naive method, perform comparably (Figures \ref{fig:artificial1}, \ref{fig:artificial1a}  top left panels for $k=0$). 
In case 2, when the label shift occurs, we observe an increase in the accuracy of the investigated  methods in relation to the naive method (see Figures \ref{fig:artificial1}, \ref{fig:artificial1a}, all panels except top left and $k=0$). This effect becomes more pronounced when $q(y=1)$ increases, which is associated with an increase in the difference between $p(y)$ and $q(y)$.
In cases 3 and 4 ($k\neq 0)$, when the distribution of $y$  given $z$ changes between the source and target datasets, we see a clear advantage of the proposed method. Moreover,  the proposed method and SEES  work better the BBSC and MLLS methods, due to the fact that the former are based on the more general SJS assumption, so they are adapted to detect the change of the conditional distribution of $y$ given $z$.
We also note that CPSM perform similarly for the first panel and $k=5$ (case 3) and the other panels for $k=5$ (case 4) which shows that it is robust to  the change in a marginal distribution of $y$.

Figures \ref{fig:artificial2} and \ref{fig:artificial2}  show how the  varying number of instances  influences the  results  for $k=5$ and $p(y=1)=0.05$ (we assume that the target data have the same number of observations as the source data). This corresponds to case 4 listed above.  As expected, Balanced Accuracy increases with the sample size, while the Approximation Error decreases. Importantly,  CPSM  outperforms the other methods for all sample sizes. Method ExTRA requires more data to reach the error level of the other methods. For the naive method, the approximation  error remains high, regardless of the sample size due to misspecification effects.

\begin{figure}
\centering
    \begin{tabular}{c c}
      \includegraphics[width=0.4\textwidth]{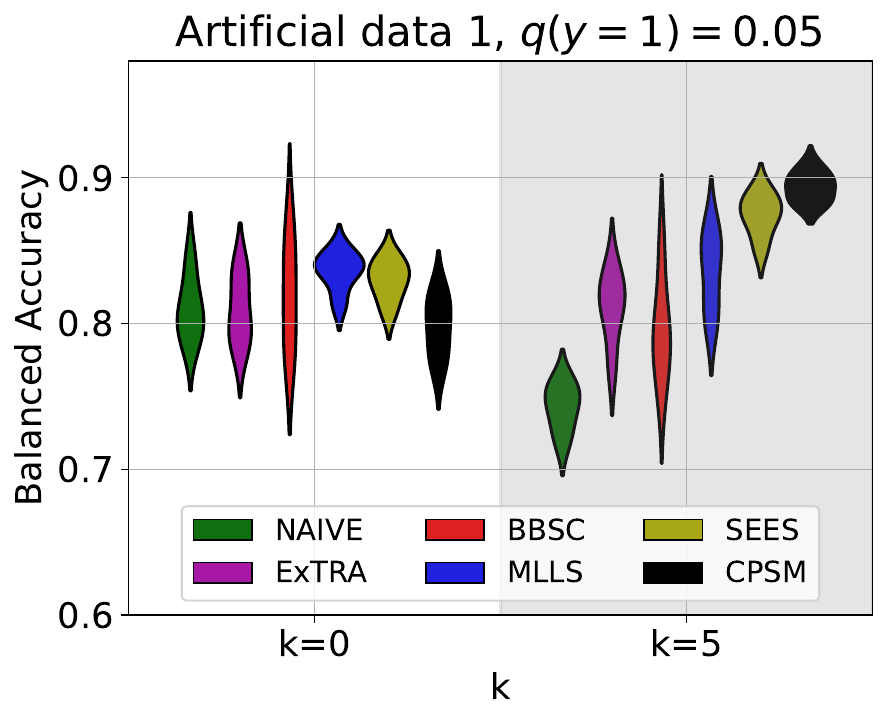} &
      \includegraphics[width=0.4\textwidth]{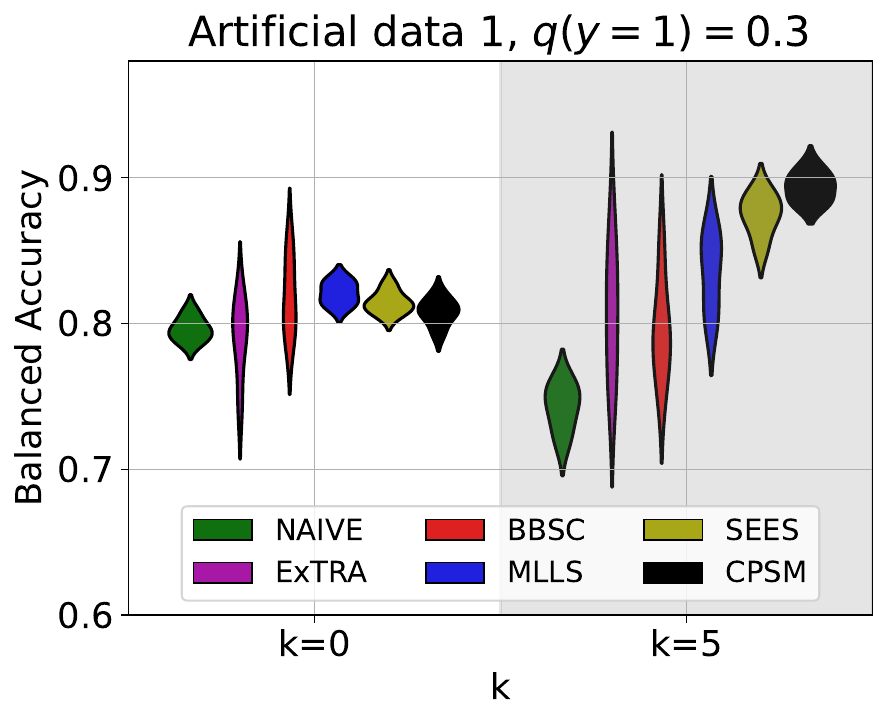}  \\
      \includegraphics[width=0.4\textwidth]{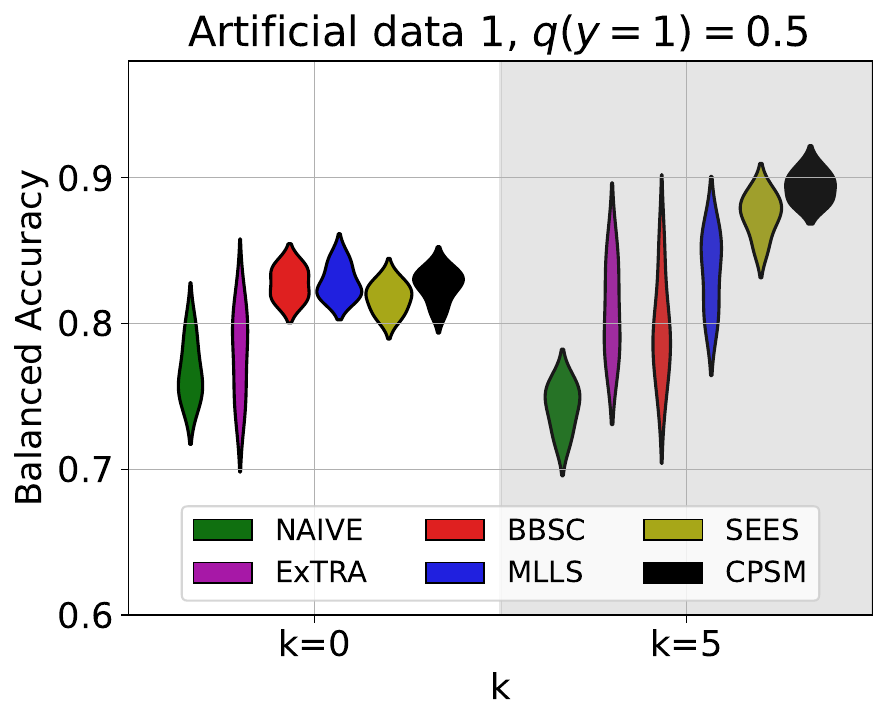} &
      \includegraphics[width=0.4\textwidth]{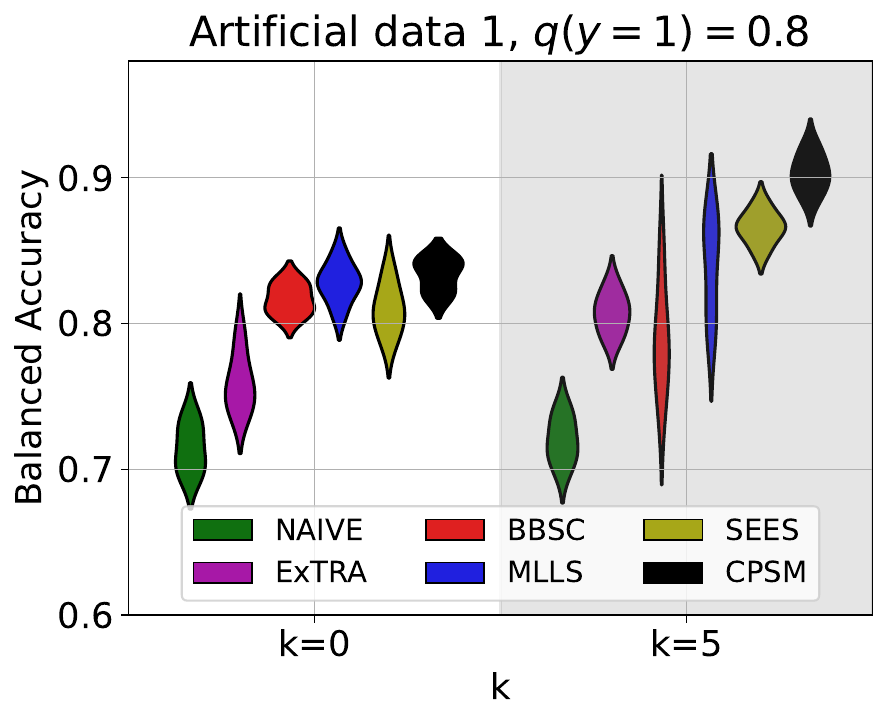}  \\
      \end{tabular}
    \caption{Balanced Accuracy for artificial data 1, for $p(y=1)=0.05$ and varying class prior for target data $q(y=1)=0.05,0.3,0.5,0.8$.}
    \label{fig:artificial1}
\end{figure}

\begin{figure}
\centering
    \begin{tabular}{c c}
      \includegraphics[width=0.4\textwidth]{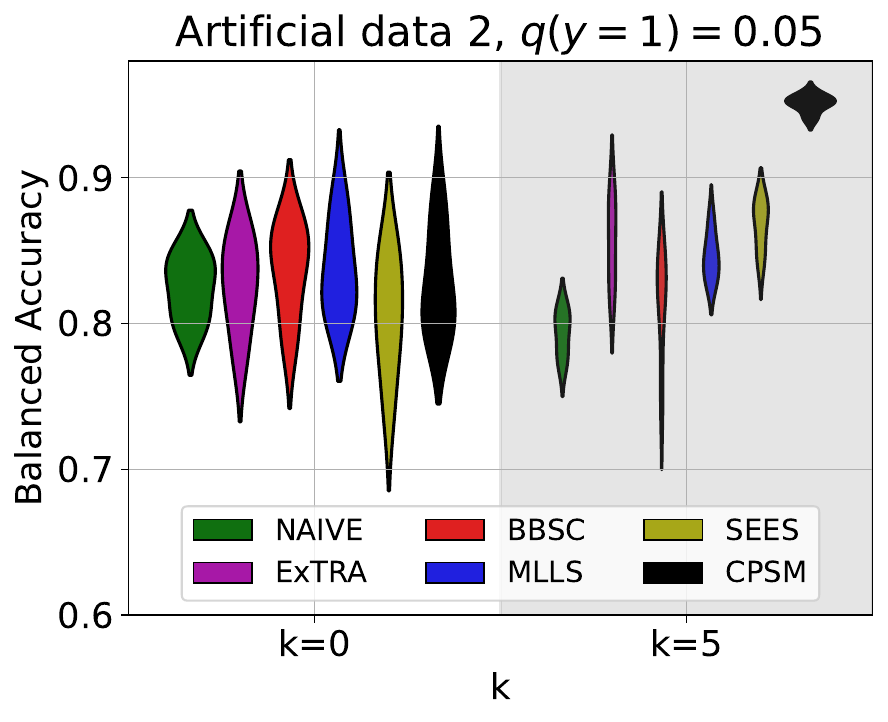} &
      \includegraphics[width=0.4\textwidth]{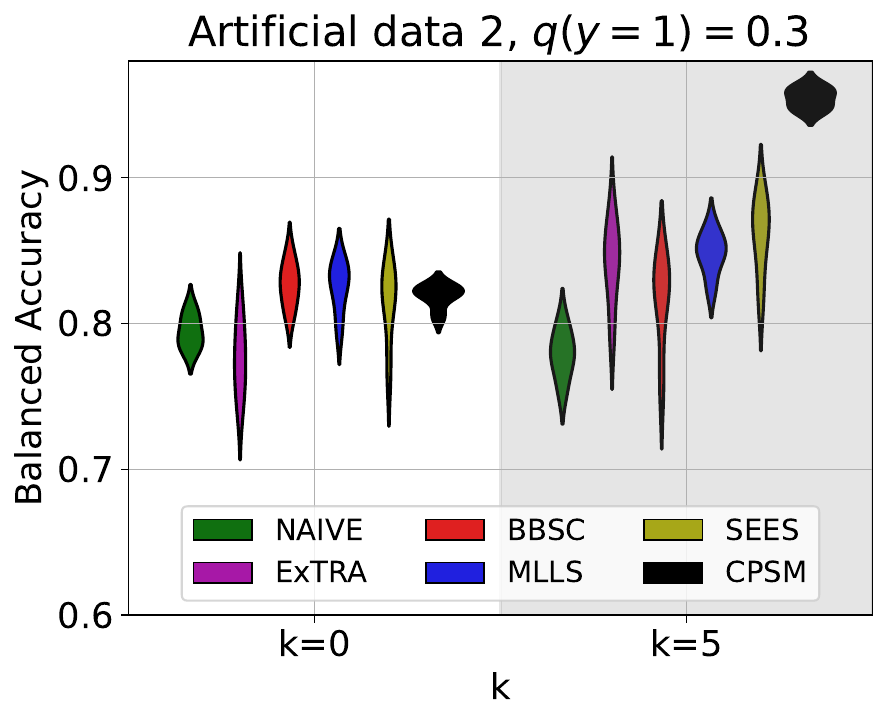}  \\
      \includegraphics[width=0.4\textwidth]{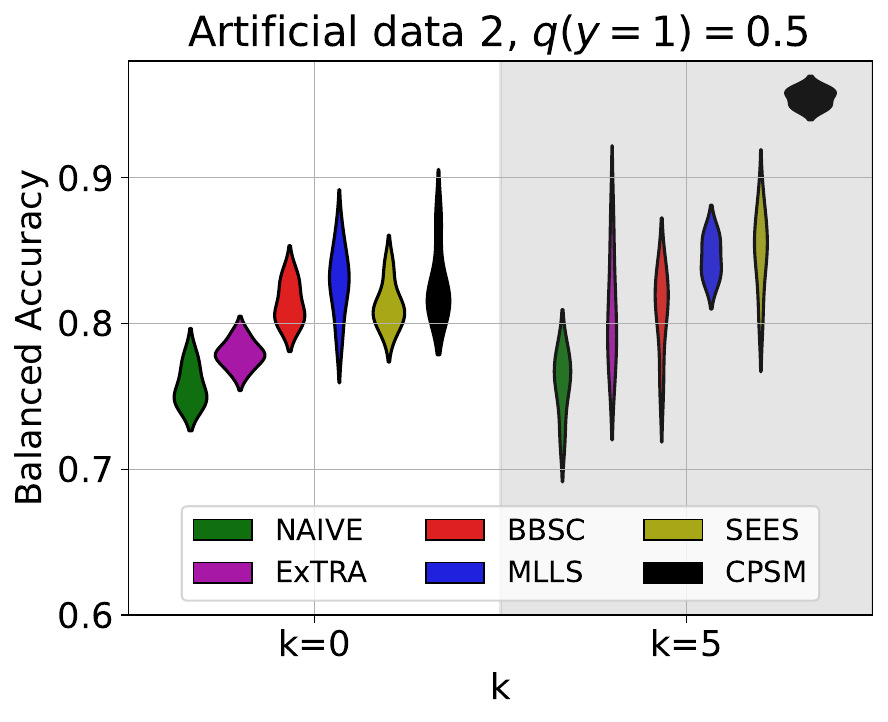} &
      \includegraphics[width=0.4\textwidth]{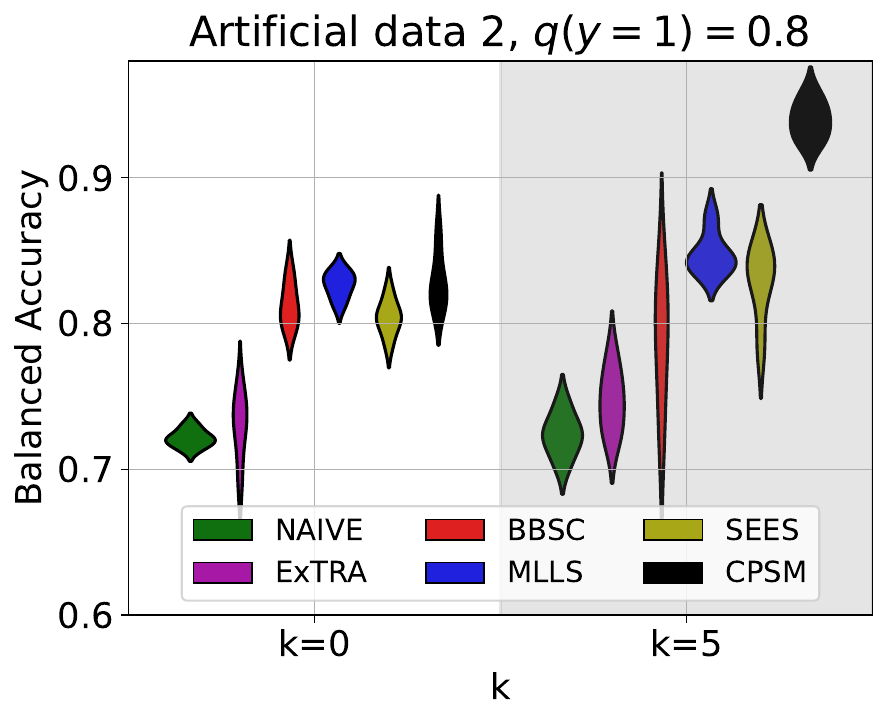}  \\
      \end{tabular}
    \caption{Balanced Accuracy for artificial data 2, for $p(y=1)=0.05$ and varying class prior for target data $q(y=1)=0.05,0.3,0.5,0.8$.}
    \label{fig:artificial1a}
\end{figure}

\begin{figure}
\centering
    \begin{tabular}{c c}
      \includegraphics[width=0.4\textwidth]{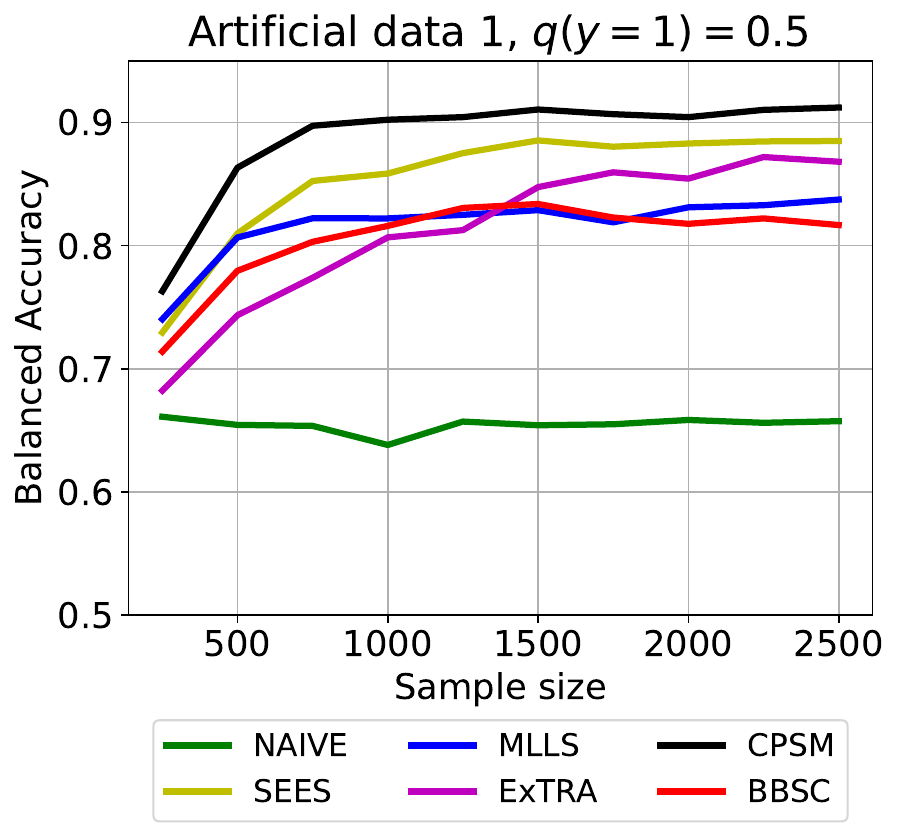} &
      \includegraphics[width=0.4\textwidth]{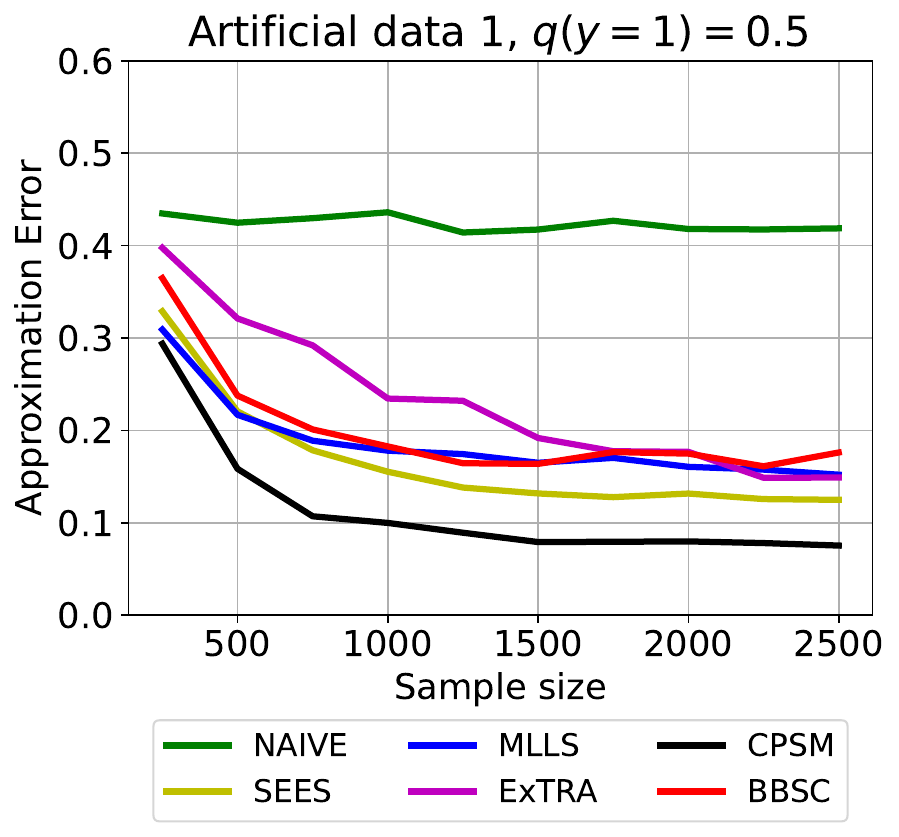}  \\
      \includegraphics[width=0.4\textwidth]{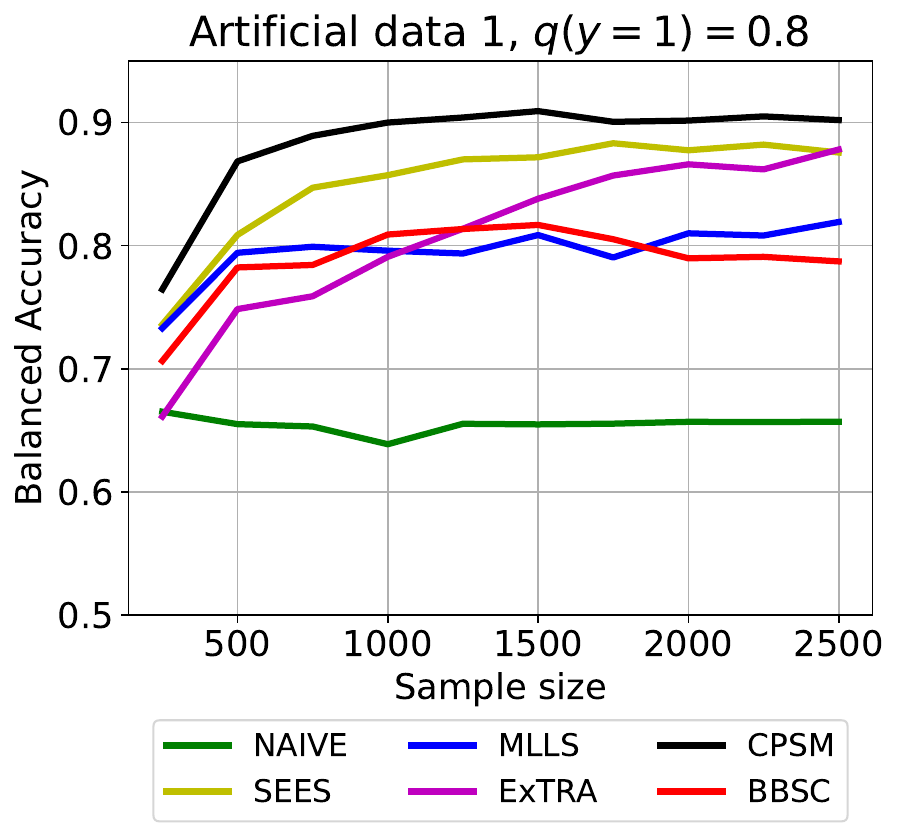} &
       \includegraphics[width=0.4\textwidth]{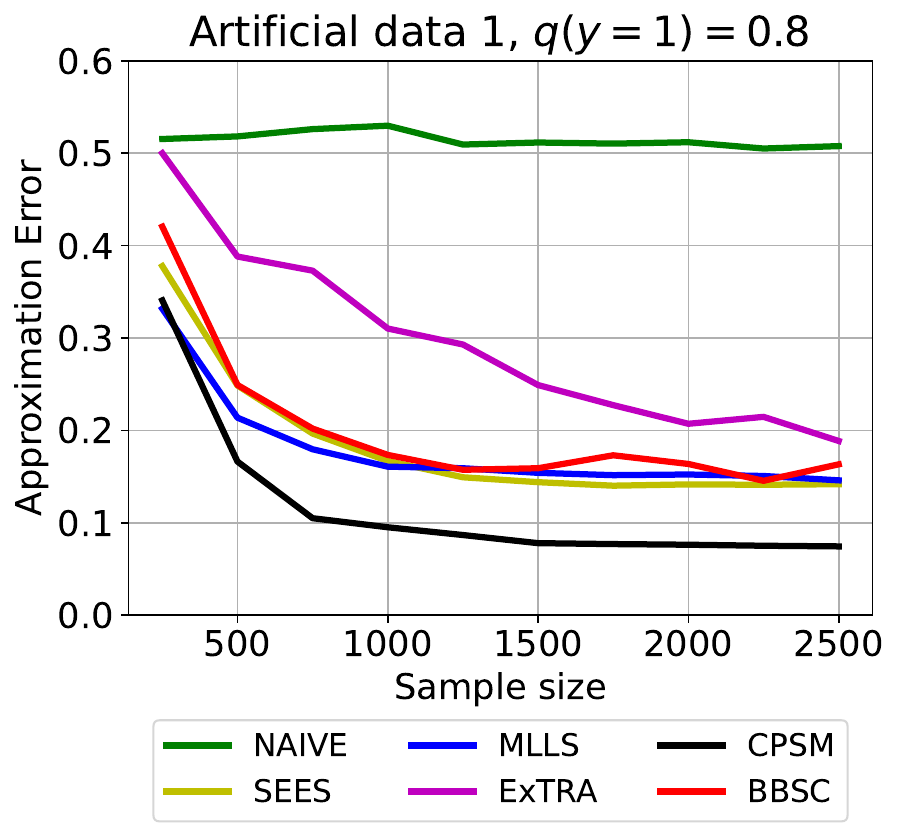}  \\
      \end{tabular}
    \caption{Balanced Accuracy and Approximation Error wrt sample size for artificial data 1,  $p(y=1)=0.05$ and $k=5$. The class prior for the target data is $q(y=1)=0.5$ (top panels) and $q(y=1)=0.8$ (bottom panels).}
    \label{fig:artificial2}
\end{figure}

\begin{figure}
\centering
    \begin{tabular}{c c}
      \includegraphics[width=0.4\textwidth]{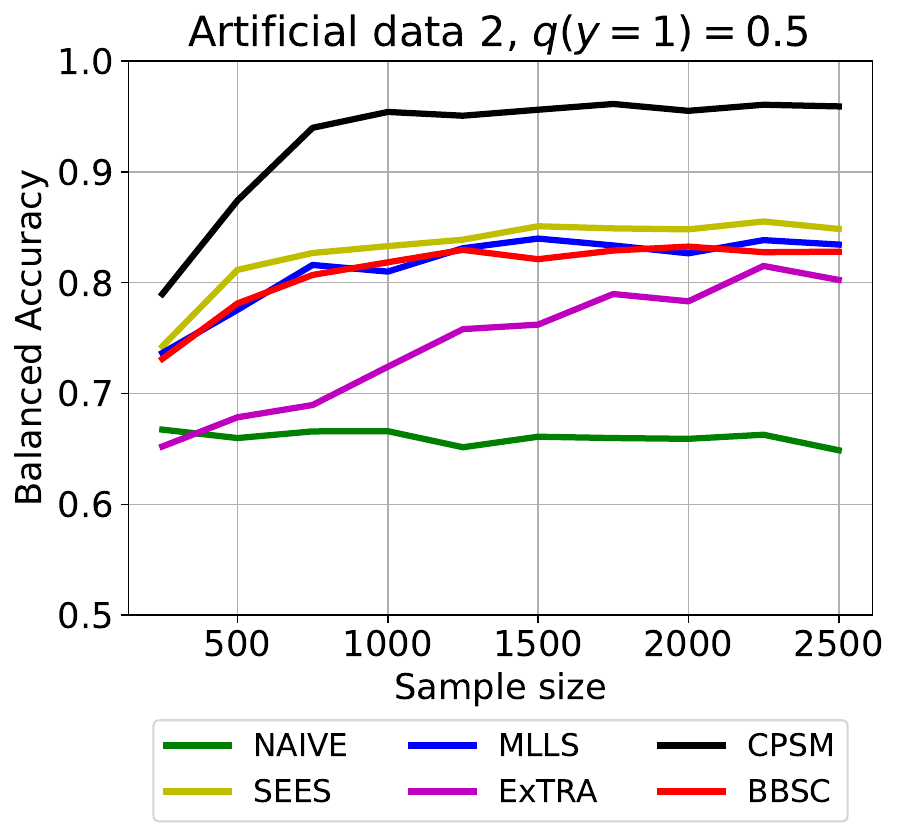} &
      \includegraphics[width=0.4\textwidth]{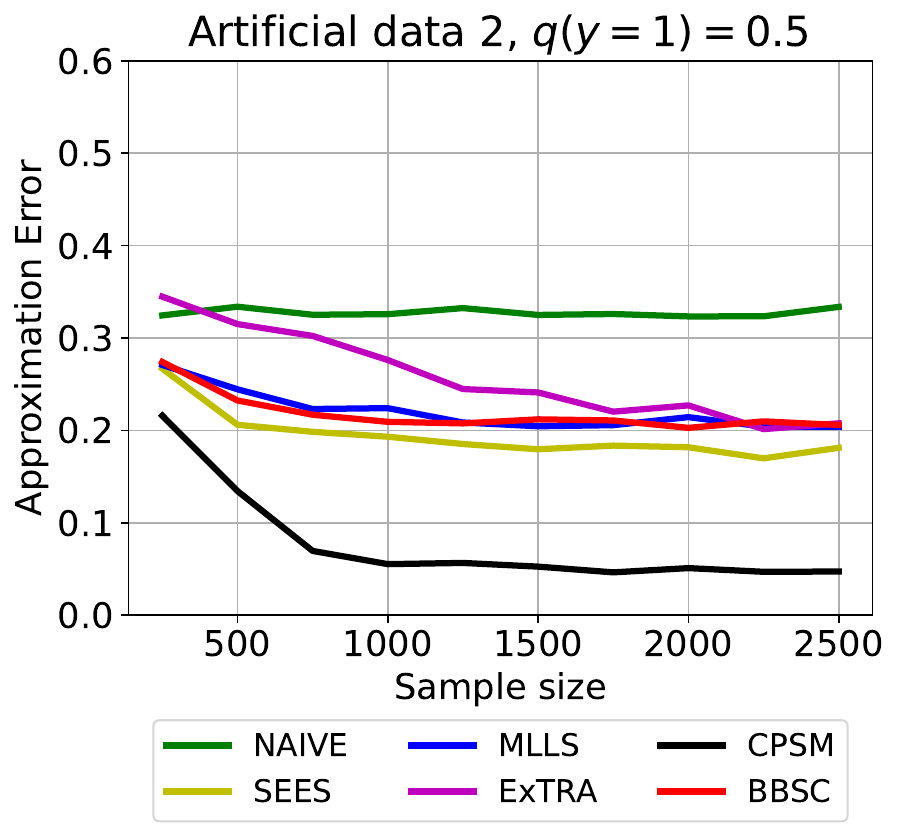}  \\
      \includegraphics[width=0.4\textwidth]{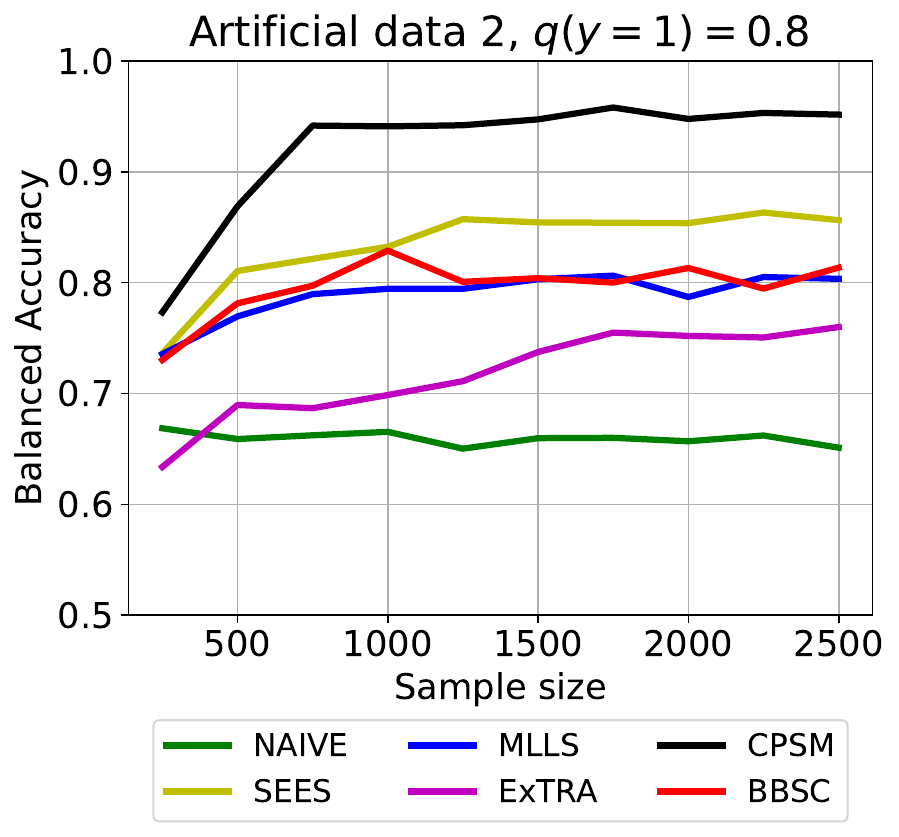} &
       \includegraphics[width=0.4\textwidth]{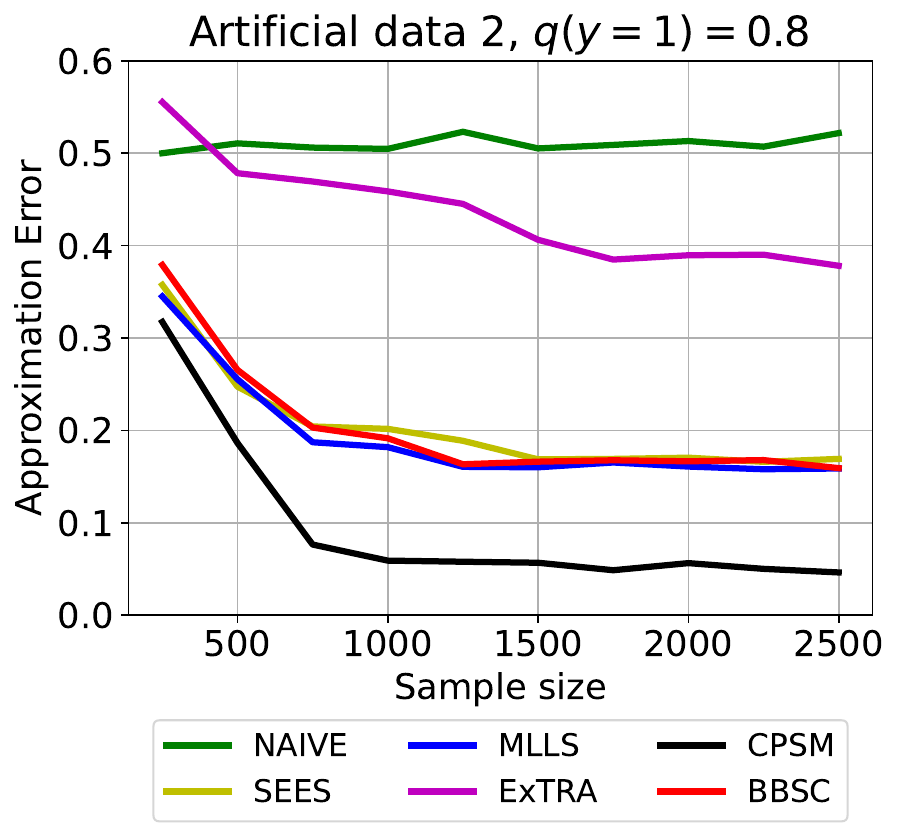}  \\
      \end{tabular}
    \caption{Balanced Accuracy and Approximation Error wrt sample size for artificial data 2,  $p(y=1)=0.05$ and $k=5$. The class prior for the target data is $q(y=1)=0.5$ (top panels) and $q(y=1)=0.8$ (bottom panels).}
    \label{fig:artificial2a}
\end{figure}

\subsection{Case study: MIMIC database}
We perform experiments on large clinical database MIMIC \cite{MIMICIII}, containing information on $33166$ patients of
various intensive care units whose  diagnosis is recorded using the coding
scheme ICD-9. In the analyses we focused on adult patients  only  ($>$16 years).
As class variables we consider indicators of 4 families of diseases, which were already used in previous studies: COPD (Chronic Obstructive Pulmonary Disease; ICD-9 codes 490–496), DIABETES (Diabetes mellitus, ICD-9 codes: 249–250), KIDNEY (Kidney disease; ICD-9 codes: 580–589) and FLUID (Fluid electrolyte disease, ICD-9 codes: 276) \cite{Bromurietal2014}.
The percentage of patients with individual diseases was $7\%$, $14\%$, $38\%$ and $37\%$ for COPD, DIABETES, KIDNEY and FLUID, respectively.
The considered dataset contains $308$ numerical features, most of which correspond to some laboratory
readings. The largest group of features are blood and diagnostic tests (e.g. Glucose, Sodium readings, etc.). In addition, there are medical scores used to track a person’s status
during the stay in an intensive care unit, such as Braden score used to assess a risk of developing a pressure ulcer. Finally, some administrative information such as age, gender, and marital status is also available. 
Pre-processing (normalization of variables, removal of missing values) was performed following the steps used by other authors using this dataset; details can be found in \cite{Bromurietal2014, ZUFFEREY2015, TEISSEYRE2019}.

For the MIMIC dataset  we introduce conditional probability shift as follows. 
The conditioning variable $z$  is either age or gender.
The patient's age is discretized into two categories '$<$60 years' ($z=0$) and '$>$60 years' ($z=1$). 
In case of the gender variable, the value $z=1$ refers to  males, and the value $z=0$ refers to  females.
Age and gender are natural choices for a conditioning variable because most of the other variables are clinical symptoms and diagnostic test results.  
We artificially sample source data  in such a way that $p(y=1|z=0)=p(y=1|z=1)=p(y=1)=a$, where $a$ is  a varying  parameter. Moreover, for  the target data, in order to allow for shifts, we sample observations  in such a way  that $q(y=1|z=0)=a$, and $q(y=1|z=1)=a+k$, where $k$ is an additional parameter that is  controlled. By choosing larger  $k$, we increase the dependence between $y$ and $z$ in the target data. 
The most interesting scenario is when the prevalence of the disease in the source data is low. In this way, the source data can resemble data collected before the increase in disease incidence, e.g. before the outbreak of the pandemic. Therefore, in the experiments, we analyze the results for $a=0.05,0.1, 0.2$. 
Moreover, we show the results for $k=0.3,0.5,0.7$. Note that $k>0$ corresponds to a shift of conditional distribution of $y$ given $z$.

In the main part of the paper we show and discuss in detail the results for the situation when the basic model is logistic regression and the conditioning variable is age. The remaining results can be found in the appendix.
Tables \ref{tab:acc_logit} and \ref{tab:error_logit} contain the values of Balanced Accuracy and Approximation Error for the considered method, when logistic regression (LOGIT) is used as a base classifier. 
The proposed CPSM emerges as the winner in terms of Balanced Accuracy and Approximation Error for most settings of parameters $a$ and $k$, while SEES usually comes second. The advantage of CPSM over SEES is observed for $a=0.05$ and $a=0.1$, while for $a=2$, SEES usually performs better. As expected, in the case of CPSM and SEES, the classification accuracy increases with increasing $k$, which is related to the fact that the discrepancy between the conditional distributions $p(y=1|z)$ and $q(y=1|z)$ increases.
We also observe a similar effect for other methods, such as BBSC and MLLS.
Tables \ref{tab:acc_logit_sex} and \ref{tab:error_logit_sex}  (Appendix) show the results for the situation when the conditioning variable is gender. The conclusions are very similar to those for the age variable.

Tables \ref{tab:acc_dnn} and \ref{tab:error_dnn} show analogous results when the neural network (NN) is used as the base classifier.
The performance when  using NN as a base classifier compared to the logistic model depends on the considered class variable and the parameter setting. For example, in the case of COPD, for $a=0.1$ and $k=0.7$, the Balanced Accuracy is 0.842 for CPSM + LOGIT and 0.752 for CPSM + NN. On the other hand, in the case of COPD, for $a=0.2$ and $k=0.3$, the Balanced Accuracy is 0.65 for CPSM + LOGIT and 0.799 for CPSM + NN.
In general, when using NN as a base classifier, SEES usually performs slightly better than CPSM in terms of Balanced Accuracy, but CPSM still has the smallest approximation  errors in most cases.

We also analyzed the computation times for individual methods. Table \ref{tab:comp_times} shows the times averaged over 5 runs for the MIMIC dataset. 
As expected, the proposed CPSM method is significantly  slower than the MLLS method, which is also based on the EM scheme. This is due to the fact that in CPSM, the M-step requires the estimation of the multinomial model parameters, while in MLLS, the M-step is straightforward and only inlolves computing prior probability by averaging the posterior probabilities. Importantly, however, MLLS and BBSC are based on the more restrictive LS scheme. Among the methods based on the more general SJS scheme (CPSM and SEES), the CPSM method is clearly faster than the SEES method. The computation times depend, of course, on the choice of the base classifier, in the case of using NN they are significantly higher than in the case of the logistic model.

\begin{table}[ht!]
\caption{{\bf Computation times} [sec] for the considered methods in the case of  MIMIC data and two base classifiers: logistic regression (LOGIT) and neural network (NN). The results are averaged over 5 runs.}
\label{tab:comp_times}
\begin{tabular}{l|llllll}
\toprule 
Classifier & NAIVE & ExTRA \cite{MaityYurochkinBanerjeeSun2023} & BBSC \cite{LiptonWangSmola2018} & MLLS  \cite{SaerensLatinneDecaestecker2002} & SEES \cite{ChenZahariaZou2024} & CPSM \\
\midrule 
 LOGIT & 0.122 &	28.501  &	0.142  &	0.123  &	 428.818  &	 91.416  \\
NN &158.435  &	182.979  &	235.506  &	158.56  &	 645.358  &	 248.322  \\
\bottomrule
\end{tabular}
\end{table}

\section{Conclusions}
In this work, we consider the  practically important  case of distribution shift between the source and target datasets, such that the conditional distribution of the class variable, given selected features, changes. The proposed CPSM method is based on modeling the conditional distributions using multinomial regression. We show that estimation of the multinomial regression parameters for the target data is feasible  using the EM algorithm. This leads to an algorithm for estimating the posterior distribution for the target data.

Experiments, conducted on artificial and medical data, show that, in the considered cases, CPSM  is superior with  regard  to balanced classification accuracy and  approximation errors to methods based on the LS and SJS assumptions (BBSC, MLLS and SEES). In particular, CPSM performs significantly better in the case when the source and target conditional distributions differ, while the priors remain unchanged. The advantage of CPSM over competing methods is significant regardless of the sample size and the prior probability of the class variable in the source data. Finally, CPSM is also computationally faster than the competing method SEES, which is the only one based, apart from CPSM, on the SJS assumption.

Several  problems of interest  for future research can be identified. It would be valuable to test whether the shift in the conditional distributions is statistically significant, e.g. by testing whether the parameters of the multinomial model for the source and target data differ. Moreover, as in the present work we have shown the advantage of  (\ref{CPS}) for inference, it is of interest to identify a  set of conditioning variables $z$ for which it holds, based on data. Also, checking whether  considered procedures are robust to small violations
of (\ref{CPS}) is desirable.



\begin{table}[ht!]
\caption{{\bf Balanced Accuracy score} for MIMIC case study, for the {\bf conditioning variable: age}. Four diseases are considered: COPD, DIABETES, KIDNEY and FLUID. The method with the highest Balanced Accuracy is in red and the second best in blue. {\bf Logistic regression} is used as a base classifier.}
\label{tab:acc_logit}
\begin{tabular}{l|l|llllll}
\toprule 
\multicolumn{8}{c}{Disease: COPD} \\
\midrule
a & k & NAIVE & ExTRA \cite{MaityYurochkinBanerjeeSun2023} & BBSC \cite{LiptonWangSmola2018} & MLLS  \cite{SaerensLatinneDecaestecker2002} & SEES \cite{ChenZahariaZou2024} & CPSM \\
\midrule 
0.05 &	0.3 &	0.51 $\pm$ 0.005 &	0.552 $\pm$ 0.052 &	\cellcolor{blue!25} 0.578 $\pm$ 0.013 &	0.567 $\pm$ 0.01 &	0.517 $\pm$ 0.008 &	\cellcolor{red!25} 0.608 $\pm$ 0.015 \\
     &	0.5 &	0.509 $\pm$ 0.003 &	0.551 $\pm$ 0.059 &	0.592 $\pm$ 0.018 &	\cellcolor{blue!25} 0.612 $\pm$ 0.014 &	0.563 $\pm$ 0.029 &	\cellcolor{red!25} 0.675 $\pm$ 0.011 \\
     &	0.7 &	0.508 $\pm$ 0.004 &	0.587 $\pm$ 0.036 &	0.612 $\pm$ 0.032 &	\cellcolor{blue!25} 0.626 $\pm$ 0.011 &	0.608 $\pm$ 0.031 &	\cellcolor{red!25} 0.796 $\pm$ 0.016 \\
\midrule
0.1 &	0.3 &	0.527 $\pm$ 0.01 &	0.586 $\pm$ 0.045 &	0.586 $\pm$ 0.013 &	\cellcolor{blue!25} 0.601 $\pm$ 0.01 &	0.575 $\pm$ 0.014 &	\cellcolor{red!25} 0.641 $\pm$ 0.012 \\
    &	0.5 &	0.528 $\pm$ 0.006 &	0.568 $\pm$ 0.048 &	0.619 $\pm$ 0.016 &	0.622 $\pm$ 0.014 &	\cellcolor{blue!25} 0.697 $\pm$ 0.007 &	\cellcolor{red!25} 0.72 $\pm$ 0.008 \\
    &	0.7 &	0.524 $\pm$ 0.008 &	0.584 $\pm$ 0.026 &	0.623 $\pm$ 0.023 &	0.635 $\pm$ 0.005 &	\cellcolor{blue!25} 0.774 $\pm$ 0.029 &	\cellcolor{red!25} 0.842 $\pm$ 0.006 \\
\midrule
0.2 &	0.3 &	0.575 $\pm$ 0.01 &	0.542 $\pm$ 0.047 &	0.608 $\pm$ 0.009 &	0.605 $\pm$ 0.009 &	\cellcolor{blue!25} 0.631 $\pm$ 0.008 &	\cellcolor{red!25} 0.65 $\pm$ 0.01 \\
    &	0.5 &	0.573 $\pm$ 0.014 &	0.622 $\pm$ 0.025 &	0.632 $\pm$ 0.012 &	0.627 $\pm$ 0.01 &	\cellcolor{red!25} 0.784 $\pm$ 0.002 &	\cellcolor{blue!25} 0.743 $\pm$ 0.01 \\
    &	0.7 &	0.582 $\pm$ 0.012 &	0.546 $\pm$ 0.054 &	0.634 $\pm$ 0.01 &	0.641 $\pm$ 0.005 &	\cellcolor{red!25} 0.855 $\pm$ 0.002 &	\cellcolor{blue!25} 0.845 $\pm$ 0.009 \\
\bottomrule
\end{tabular}
\begin{tabular}{l|l|llllll}
\multicolumn{8}{c}{Disease: DIABETES} \\
\midrule
a & k & NAIVE & ExTRA \cite{MaityYurochkinBanerjeeSun2023} & BBSC \cite{LiptonWangSmola2018} & MLLS  \cite{SaerensLatinneDecaestecker2002} & SEES \cite{ChenZahariaZou2024} & CPSM \\
\midrule
0.05 &	0.3 &	0.548 $\pm$ 0.012 &	0.561 $\pm$ 0.023 &	\cellcolor{blue!25} 0.625 $\pm$ 0.019 &	0.616 $\pm$ 0.017 &	0.579 $\pm$ 0.021 &	\cellcolor{red!25} 0.637 $\pm$ 0.007 \\
     &	0.5 &	0.549 $\pm$ 0.009 &	0.557 $\pm$ 0.019 &	0.656 $\pm$ 0.015 &	0.641 $\pm$ 0.014 &	\cellcolor{blue!25} 0.668 $\pm$ 0.034 &	\cellcolor{red!25} 0.73 $\pm$ 0.025 \\
     &	0.7 &	0.548 $\pm$ 0.01 &	0.544 $\pm$ 0.023 &	0.663 $\pm$ 0.02 &	0.661 $\pm$ 0.016 &	\cellcolor{blue!25} 0.716 $\pm$ 0.037 &	\cellcolor{red!25} 0.828 $\pm$ 0.03 \\
\midrule
0.1 &	0.3 &	0.577 $\pm$ 0.01 &	0.593 $\pm$ 0.014 &	0.639 $\pm$ 0.015 &	0.638 $\pm$ 0.011 &	\cellcolor{blue!25} 0.644 $\pm$ 0.01 &	\cellcolor{red!25} 0.669 $\pm$ 0.012 \\
    &	0.5 &	0.579 $\pm$ 0.011 &	0.58 $\pm$ 0.016 &	0.686 $\pm$ 0.015 &	0.665 $\pm$ 0.015 &	\cellcolor{blue!25} 0.772 $\pm$ 0.014 &	\cellcolor{red!25} 0.776 $\pm$ 0.013 \\
    &	0.7 &	0.577 $\pm$ 0.007 &	0.567 $\pm$ 0.036 &	0.697 $\pm$ 0.019 &	0.688 $\pm$ 0.028 &	\cellcolor{blue!25} 0.852 $\pm$ 0.01 &	\cellcolor{red!25} 0.861 $\pm$ 0.004 \\
\midrule
0.2 &	0.3 &	0.629 $\pm$ 0.01 &	0.672 $\pm$ 0.026 &	0.663 $\pm$ 0.008 &	0.648 $\pm$ 0.01 &	\cellcolor{blue!25} 0.698 $\pm$ 0.017 &	\cellcolor{red!25} 0.708 $\pm$ 0.007 \\
    &	0.5 &	0.63 $\pm$ 0.011 &	0.653 $\pm$ 0.029 &	0.683 $\pm$ 0.012 &	0.691 $\pm$ 0.019 &	\cellcolor{red!25} 0.811 $\pm$ 0.013 &	\cellcolor{blue!25} 0.797 $\pm$ 0.013 \\
    &	0.7 &	0.631 $\pm$ 0.009 &	0.701 $\pm$ 0.04 &	0.727 $\pm$ 0.028 &	0.707 $\pm$ 0.012 &	\cellcolor{red!25} 0.874 $\pm$ 0.006 &	\cellcolor{blue!25} 0.87 $\pm$ 0.004 \\
\bottomrule
\end{tabular}
\begin{tabular}{l|l|llllll}
\multicolumn{8}{c}{Disease: KIDNEY} \\
\midrule
a & k & NAIVE & ExTRA \cite{MaityYurochkinBanerjeeSun2023} & BBSC \cite{LiptonWangSmola2018} & MLLS  \cite{SaerensLatinneDecaestecker2002} & SEES \cite{ChenZahariaZou2024} & CPSM \\
\midrule
0.05 &	0.3 &	0.601 $\pm$ 0.012 &	0.644 $\pm$ 0.029 &	\cellcolor{blue!25} 0.734 $\pm$ 0.011 &	0.733 $\pm$ 0.015 &	0.69 $\pm$ 0.013 &	\cellcolor{red!25} 0.771 $\pm$ 0.022 \\
    &	0.5 &	0.592 $\pm$ 0.015 &	0.668 $\pm$ 0.02 &	0.76 $\pm$ 0.014 &	\cellcolor{blue!25} 0.766 $\pm$ 0.013 &	0.761 $\pm$ 0.024 &	\cellcolor{red!25} 0.825 $\pm$ 0.014 \\
    &	0.7 &	0.602 $\pm$ 0.012 &	0.663 $\pm$ 0.014 &	0.774 $\pm$ 0.014 &	0.788 $\pm$ 0.011 &	\cellcolor{blue!25} 0.807 $\pm$ 0.022 &	\cellcolor{red!25} 0.885 $\pm$ 0.003 \\
\midrule
0.1 &	0.3 &	0.663 $\pm$ 0.01 &	0.698 $\pm$ 0.021 &	0.757 $\pm$ 0.006 &	0.76 $\pm$ 0.01 &	\cellcolor{blue!25} 0.761 $\pm$ 0.007 &	\cellcolor{red!25} 0.79 $\pm$ 0.02 \\
    &	0.5 &	0.652 $\pm$ 0.009 &	0.699 $\pm$ 0.081 &	0.781 $\pm$ 0.01 &	0.782 $\pm$ 0.012 &	\cellcolor{blue!25} 0.837 $\pm$ 0.018 &	\cellcolor{red!25} 0.845 $\pm$ 0.008 \\
    &	0.7 &	0.661 $\pm$ 0.009 &	0.711 $\pm$ 0.056 &	0.802 $\pm$ 0.013 &	0.806 $\pm$ 0.008 &	\cellcolor{blue!25} 0.89 $\pm$ 0.013 &	\cellcolor{red!25} 0.896 $\pm$ 0.003 \\
\midrule
0.2 &	0.3 &	0.736 $\pm$ 0.006 &	0.766 $\pm$ 0.02 &	0.767 $\pm$ 0.006 &	0.771 $\pm$ 0.01 &	\cellcolor{blue!25} 0.795 $\pm$ 0.007 &	\cellcolor{red!25} 0.803 $\pm$ 0.008 \\
    &	0.5 &	0.735 $\pm$ 0.003 &	0.761 $\pm$ 0.047 &	0.8 $\pm$ 0.006 &	0.792 $\pm$ 0.002 &	\cellcolor{red!25} 0.863 $\pm$ 0.007 &	\cellcolor{blue!25} 0.857 $\pm$ 0.005 \\
    &	0.7 &	0.738 $\pm$ 0.005 &	0.769 $\pm$ 0.018 &	0.818 $\pm$ 0.008 &	0.808 $\pm$ 0.006 &	\cellcolor{red!25} 0.901 $\pm$ 0.005 &	\cellcolor{blue!25} 0.899 $\pm$ 0.004 \\
\bottomrule
\end{tabular}
\begin{tabular}{l|l|llllll}
\multicolumn{8}{c}{Disease: FLUID} \\
\midrule
a & k & NAIVE & ExTRA \cite{MaityYurochkinBanerjeeSun2023} & BBSC \cite{LiptonWangSmola2018} & MLLS  \cite{SaerensLatinneDecaestecker2002} & SEES \cite{ChenZahariaZou2024} & CPSM \\
\midrule
0.05 &	0.3 &	0.511 $\pm$ 0.002 &	0.548 $\pm$ 0.025 &	0.555 $\pm$ 0.015 &	\cellcolor{blue!25} 0.562 $\pm$ 0.017 &	0.515 $\pm$ 0.006 &	\cellcolor{red!25} 0.584 $\pm$ 0.01 \\
     &	0.5 &	0.508 $\pm$ 0.003 &	0.562 $\pm$ 0.034 &	\cellcolor{blue!25} 0.587 $\pm$ 0.028 &	0.577 $\pm$ 0.011 &	0.536 $\pm$ 0.019 &	\cellcolor{red!25} 0.663 $\pm$ 0.046 \\
    &	0.7 &	0.509 $\pm$ 0.003 &	0.563 $\pm$ 0.019 &	0.595 $\pm$ 0.026 &	\cellcolor{blue!25} 0.619 $\pm$ 0.02 &	0.543 $\pm$ 0.028 &	\cellcolor{red!25} 0.759 $\pm$ 0.028 \\
\midrule
0.1 &	0.3 &	0.526 $\pm$ 0.006 &	0.554 $\pm$ 0.034 &	\cellcolor{blue!25} 0.595 $\pm$ 0.024 &	0.594 $\pm$ 0.011 &	0.576 $\pm$ 0.021 &	\cellcolor{red!25} 0.619 $\pm$ 0.008 \\
 &	0.5 &	0.518 $\pm$ 0.003 &	0.538 $\pm$ 0.025 &	0.618 $\pm$ 0.019 &	0.618 $\pm$ 0.011 &	\cellcolor{blue!25} 0.642 $\pm$ 0.02 &	\cellcolor{red!25} 0.736 $\pm$ 0.027 \\
 &	0.7 &	0.52 $\pm$ 0.006 &	0.56 $\pm$ 0.023 &	0.645 $\pm$ 0.012 &	0.652 $\pm$ 0.026 &	\cellcolor{blue!25} 0.69 $\pm$ 0.047 &	\cellcolor{red!25} 0.83 $\pm$ 0.021 \\
\midrule
0.2 &	0.3 &	0.567 $\pm$ 0.012 &	0.573 $\pm$ 0.044 &	0.609 $\pm$ 0.013 &	0.605 $\pm$ 0.008 &	\cellcolor{blue!25} 0.634 $\pm$ 0.016 &	\cellcolor{red!25} 0.66 $\pm$ 0.01 \\
 &	0.5 &	0.569 $\pm$ 0.01 &	0.56 $\pm$ 0.098 &	0.643 $\pm$ 0.014 &	0.642 $\pm$ 0.015 &	\cellcolor{red!25} 0.781 $\pm$ 0.007 &	\cellcolor{blue!25} 0.772 $\pm$ 0.004 \\
 &	0.7 &	0.569 $\pm$ 0.009 &	0.612 $\pm$ 0.031 &	0.666 $\pm$ 0.016 &	0.661 $\pm$ 0.008 &	\cellcolor{red!25} 0.862 $\pm$ 0.003 &	\cellcolor{blue!25} 0.85 $\pm$ 0.002 \\
\bottomrule
\end{tabular}
\end{table}


\begin{table}[ht!]
\caption{{\bf Approximation Error} for MIMIC case study, for the {\bf conditioning variable: age}. Four diseases are considered: COPD, DIABETES, KIDNEY and FLUID. The method with the smallest approximation  error is in red and the second best in blue. {\bf Logistic regression} is used as a base classifier.}
\label{tab:error_logit}
\begin{tabular}{l|l|llllll}
\toprule 
\multicolumn{8}{c}{Disease: COPD} \\
\midrule
a & k & NAIVE & ExTRA \cite{MaityYurochkinBanerjeeSun2023} & BBSC \cite{LiptonWangSmola2018} & MLLS  \cite{SaerensLatinneDecaestecker2002} & SEES \cite{ChenZahariaZou2024} & CPSM \\
\midrule 
0.05 &	0.3 &	0.203 $\pm$ 0.002 &	0.172 $\pm$ 0.016 &	0.155 $\pm$ 0.021 &	\cellcolor{blue!25} 0.144 $\pm$ 0.006 &	0.19 $\pm$ 0.01 &	\cellcolor{red!25} 0.093 $\pm$ 0.009 \\
 &	0.5 &	0.259 $\pm$ 0.002 &	0.227 $\pm$ 0.012 &	0.231 $\pm$ 0.039 &	0.208 $\pm$ 0.002 &	\cellcolor{blue!25} 0.203 $\pm$ 0.019 &	\cellcolor{red!25} 0.101 $\pm$ 0.013 \\
 &	0.7 &	0.299 $\pm$ 0.005 &	0.256 $\pm$ 0.011 &	0.505 $\pm$ 0.161 &	0.27 $\pm$ 0.01 &	\cellcolor{blue!25} 0.221 $\pm$ 0.023 &	\cellcolor{red!25} 0.106 $\pm$ 0.011 \\
\midrule
0.1 &	0.3 &	0.16 $\pm$ 0.002 &	0.164 $\pm$ 0.036 &	0.136 $\pm$ 0.005 &	0.132 $\pm$ 0.002 &	\cellcolor{blue!25} 0.107 $\pm$ 0.013 &	\cellcolor{red!25} 0.062 $\pm$ 0.005 \\
 &	0.5 &	0.218 $\pm$ 0.003 &	0.217 $\pm$ 0.04 &	0.212 $\pm$ 0.008 &	0.208 $\pm$ 0.003 &	\cellcolor{blue!25} 0.087 $\pm$ 0.005 &	\cellcolor{red!25} 0.069 $\pm$ 0.004 \\
 &	0.7 &	0.258 $\pm$ 0.006 &	0.28 $\pm$ 0.053 &	0.275 $\pm$ 0.013 &	0.278 $\pm$ 0.009 &	\cellcolor{blue!25} 0.113 $\pm$ 0.017 &	\cellcolor{red!25} 0.078 $\pm$ 0.009 \\
\midrule
0.2 &	0.3 &	0.137 $\pm$ 0.002 &	0.181 $\pm$ 0.051 &	0.128 $\pm$ 0.005 &	0.13 $\pm$ 0.001 &	\cellcolor{red!25} 0.041 $\pm$ 0.008 &	\cellcolor{blue!25} 0.055 $\pm$ 0.007 \\
 &	0.5 &	0.205 $\pm$ 0.005 &	0.208 $\pm$ 0.01 &	0.214 $\pm$ 0.004 &	0.213 $\pm$ 0.004 &	\cellcolor{red!25} 0.057 $\pm$ 0.011 &	\cellcolor{blue!25} 0.06 $\pm$ 0.01 \\
 &	0.7 &	0.254 $\pm$ 0.003 &	0.292 $\pm$ 0.043 &	0.279 $\pm$ 0.005 &	0.283 $\pm$ 0.005 &	\cellcolor{red!25} 0.033 $\pm$ 0.008 &	\cellcolor{blue!25} 0.067 $\pm$ 0.006 \\
\bottomrule
\end{tabular}
\begin{tabular}{l|l|llllll}
\multicolumn{8}{c}{Disease: DIABETES} \\
\midrule
a & k & NAIVE & ExTRA \cite{MaityYurochkinBanerjeeSun2023} & BBSC \cite{LiptonWangSmola2018} & MLLS  \cite{SaerensLatinneDecaestecker2002} & SEES \cite{ChenZahariaZou2024} & CPSM \\
\midrule
0.05 &	0.3 &	0.219 $\pm$ 0.008 &	0.198 $\pm$ 0.009 &	\cellcolor{blue!25} 0.135 $\pm$ 0.014 &	0.135 $\pm$ 0.009 &	0.16 $\pm$ 0.014 &	\cellcolor{red!25} 0.09 $\pm$ 0.014 \\
 &	0.5 &	0.29 $\pm$ 0.011 &	0.257 $\pm$ 0.023 &	0.198 $\pm$ 0.016 &	0.209 $\pm$ 0.009 &	\cellcolor{blue!25} 0.158 $\pm$ 0.024 &	\cellcolor{red!25} 0.101 $\pm$ 0.022 \\
 &	0.7 &	0.343 $\pm$ 0.014 &	0.329 $\pm$ 0.019 &	0.248 $\pm$ 0.009 &	0.266 $\pm$ 0.013 &	\cellcolor{blue!25} 0.178 $\pm$ 0.03 &	\cellcolor{red!25} 0.098 $\pm$ 0.031 \\
\midrule
0.1 &	0.3 &	0.174 $\pm$ 0.007 &	0.137 $\pm$ 0.013 &	0.122 $\pm$ 0.007 &	0.12 $\pm$ 0.003 &	\cellcolor{blue!25} 0.082 $\pm$ 0.01 &	\cellcolor{red!25} 0.064 $\pm$ 0.005 \\
 &	0.5 &	0.245 $\pm$ 0.009 &	0.238 $\pm$ 0.02 &	0.18 $\pm$ 0.004 &	0.194 $\pm$ 0.007 &	\cellcolor{blue!25} 0.069 $\pm$ 0.011 &	\cellcolor{red!25} 0.068 $\pm$ 0.008 \\
 &	0.7 &	0.295 $\pm$ 0.013 &	0.289 $\pm$ 0.024 &	0.241 $\pm$ 0.014 &	0.259 $\pm$ 0.011 &	\cellcolor{blue!25} 0.072 $\pm$ 0.011 &	\cellcolor{red!25} 0.063 $\pm$ 0.018 \\
\midrule
0.2 &	0.3 &	0.124 $\pm$ 0.004 &	0.106 $\pm$ 0.023 &	0.107 $\pm$ 0.002 &	0.109 $\pm$ 0.004 &	\cellcolor{red!25} 0.038 $\pm$ 0.012 &	\cellcolor{blue!25} 0.041 $\pm$ 0.007 \\
 &	0.5 &	0.201 $\pm$ 0.007 &	0.191 $\pm$ 0.034 &	0.181 $\pm$ 0.008 &	0.18 $\pm$ 0.005 &	\cellcolor{blue!25} 0.047 $\pm$ 0.008 &	\cellcolor{red!25} 0.044 $\pm$ 0.004 \\
 &	0.7 &	0.26 $\pm$ 0.012 &	0.232 $\pm$ 0.027 &	0.235 $\pm$ 0.009 &	0.247 $\pm$ 0.01 &	\cellcolor{red!25} 0.036 $\pm$ 0.007 &	\cellcolor{blue!25} 0.048 $\pm$ 0.011 \\
\bottomrule
\end{tabular}
\begin{tabular}{l|l|llllll}
\multicolumn{8}{c}{Disease: KIDNEY} \\
\midrule
a & k & NAIVE & ExTRA \cite{MaityYurochkinBanerjeeSun2023} & BBSC \cite{LiptonWangSmola2018} & MLLS  \cite{SaerensLatinneDecaestecker2002} & SEES \cite{ChenZahariaZou2024} & CPSM \\
\midrule
0.05 &	0.3 &	0.187 $\pm$ 0.003 &	0.158 $\pm$ 0.014 &	0.108 $\pm$ 0.012 &	\cellcolor{blue!25} 0.095 $\pm$ 0.009 &	0.115 $\pm$ 0.007 &	\cellcolor{red!25} 0.059 $\pm$ 0.015 \\
 &	0.5 &	0.251 $\pm$ 0.005 &	0.215 $\pm$ 0.015 &	0.147 $\pm$ 0.012 &	0.148 $\pm$ 0.008 &	\cellcolor{blue!25} 0.115 $\pm$ 0.014 &	\cellcolor{red!25} 0.065 $\pm$ 0.014 \\
 &	0.7 &	0.292 $\pm$ 0.006 &	0.243 $\pm$ 0.015 &	0.186 $\pm$ 0.013 &	0.195 $\pm$ 0.018 &	\cellcolor{blue!25} 0.13 $\pm$ 0.02 &	\cellcolor{red!25} 0.054 $\pm$ 0.005 \\
\midrule
0.1 &	0.3 &	0.139 $\pm$ 0.003 &	0.122 $\pm$ 0.021 &	0.091 $\pm$ 0.009 &	0.08 $\pm$ 0.006 &	\cellcolor{blue!25} 0.058 $\pm$ 0.004 &	\cellcolor{red!25} 0.047 $\pm$ 0.012 \\
 &	0.5 &	0.198 $\pm$ 0.004 &	0.174 $\pm$ 0.021 &	0.139 $\pm$ 0.009 &	0.142 $\pm$ 0.005 &	\cellcolor{blue!25} 0.049 $\pm$ 0.016 &	\cellcolor{red!25} 0.046 $\pm$ 0.005 \\
 &	0.7 &	0.238 $\pm$ 0.007 &	0.205 $\pm$ 0.024 &	0.18 $\pm$ 0.006 &	0.186 $\pm$ 0.009 &	\cellcolor{blue!25} 0.047 $\pm$ 0.014 &	\cellcolor{red!25} 0.039 $\pm$ 0.007 \\
\midrule
0.2 &	0.3 &	0.095 $\pm$ 0.002 &	0.08 $\pm$ 0.021 &	0.077 $\pm$ 0.002 &	0.079 $\pm$ 0.003 &	\cellcolor{red!25} 0.027 $\pm$ 0.006 &	\cellcolor{blue!25} 0.028 $\pm$ 0.002 \\
 &	0.5 &	0.153 $\pm$ 0.003 &	0.129 $\pm$ 0.031 &	0.134 $\pm$ 0.004 &	0.132 $\pm$ 0.001 &	\cellcolor{blue!25} 0.047 $\pm$ 0.009 &	\cellcolor{red!25} 0.033 $\pm$ 0.003 \\
 &	0.7 &	0.2 $\pm$ 0.004 &	0.158 $\pm$ 0.011 &	0.176 $\pm$ 0.004 &	0.183 $\pm$ 0.004 &	\cellcolor{blue!25} 0.038 $\pm$ 0.009 &	\cellcolor{red!25} 0.032 $\pm$ 0.002 \\
\bottomrule
\end{tabular}
\begin{tabular}{l|l|llllll}
\multicolumn{8}{c}{Disease: FLUID} \\
\midrule
a & k & NAIVE & ExTRA \cite{MaityYurochkinBanerjeeSun2023} & BBSC \cite{LiptonWangSmola2018} & MLLS  \cite{SaerensLatinneDecaestecker2002} & SEES \cite{ChenZahariaZou2024} & CPSM \\
\midrule
0.05 &	0.3 &	0.235 $\pm$ 0.005 &	0.203 $\pm$ 0.023 &	0.284 $\pm$ 0.21 &	\cellcolor{blue!25} 0.151 $\pm$ 0.012 &	0.224 $\pm$ 0.011 &	\cellcolor{red!25} 0.117 $\pm$ 0.015 \\
 &	0.5 &	0.313 $\pm$ 0.007 &	0.261 $\pm$ 0.026 &	\cellcolor{blue!25} 0.224 $\pm$ 0.015 &	0.226 $\pm$ 0.01 &	0.272 $\pm$ 0.023 &	\cellcolor{red!25} 0.138 $\pm$ 0.043 \\
 &	0.7 &	0.372 $\pm$ 0.012 &	0.315 $\pm$ 0.007 &	\cellcolor{blue!25} 0.29 $\pm$ 0.009 &	0.291 $\pm$ 0.012 &	0.325 $\pm$ 0.031 &	\cellcolor{red!25} 0.144 $\pm$ 0.024 \\
\midrule
0.1 &	0.3 &	0.188 $\pm$ 0.005 &	0.154 $\pm$ 0.01 &	0.127 $\pm$ 0.01 &	0.128 $\pm$ 0.005 &	\cellcolor{blue!25} 0.123 $\pm$ 0.013 &	\cellcolor{red!25} 0.076 $\pm$ 0.01 \\
 &	0.5 &	0.267 $\pm$ 0.007 &	0.262 $\pm$ 0.017 &	0.221 $\pm$ 0.01 &	0.212 $\pm$ 0.007 &	\cellcolor{blue!25} 0.149 $\pm$ 0.017 &	\cellcolor{red!25} 0.083 $\pm$ 0.025 \\
&	0.7 &	0.326 $\pm$ 0.012 &	0.305 $\pm$ 0.021 &	0.288 $\pm$ 0.01 &	0.283 $\pm$ 0.008 &	\cellcolor{blue!25} 0.194 $\pm$ 0.028 &	\cellcolor{red!25} 0.087 $\pm$ 0.018 \\
\midrule
0.2 &	0.3 &	0.143 $\pm$ 0.003 &	0.147 $\pm$ 0.022 &	0.116 $\pm$ 0.004 &	0.118 $\pm$ 0.004 &	\cellcolor{red!25} 0.052 $\pm$ 0.006 &	\cellcolor{blue!25} 0.054 $\pm$ 0.007 \\
 &	0.5 &	0.227 $\pm$ 0.006 &	0.202 $\pm$ 0.012 &	0.204 $\pm$ 0.006 &	0.204 $\pm$ 0.005 &	\cellcolor{red!25} 0.043 $\pm$ 0.007 &	\cellcolor{blue!25} 0.054 $\pm$ 0.003 \\
 &	0.7 &	0.292 $\pm$ 0.01 &	0.271 $\pm$ 0.015 &	0.278 $\pm$ 0.008 &	0.281 $\pm$ 0.006 &	\cellcolor{red!25} 0.047 $\pm$ 0.005 &	\cellcolor{blue!25} 0.065 $\pm$ 0.004 \\
\bottomrule
\end{tabular}
\end{table}


\clearpage

\bibliography{References}

\begin{appendices}

\section{Supplementary results for the case study on the MIMIC dataset}
\label{secA1}

The Appendix contains supplementary results for the case study on the MIMIC data. Tables \ref{tab:acc_dnn} and \ref{tab:error_dnn} show the Balanced Accuracy and Approximation Error when the base classifier is a neural network and the conditioning variable is age. Tables \ref{tab:acc_logit_sex} and \ref{tab:error_logit_sex} show the Balanced Accuracy and Approximation Error when the base classifier is a logistic model and the conditioning variable is gender.


\begin{table}[ht!]
\caption{{\bf Balanced Accuracy score} for MIMIC case study, for the {\bf conditioning variable: age}. Four diseases are considered: COPD, DIABETES, KIDNEY and FLUID. The method with the highest  Balanced Accuracy is in red and the second best in blue. {\bf Neural Network} is used as a base classifier.}
\label{tab:acc_dnn}
\begin{tabular}{l|l|llllll}
\toprule 
\multicolumn{8}{c}{Disease: COPD} \\
\midrule
a & k & NAIVE & ExTRA \cite{MaityYurochkinBanerjeeSun2023} & BBSC \cite{LiptonWangSmola2018} & MLLS  \cite{SaerensLatinneDecaestecker2002} & SEES \cite{ChenZahariaZou2024} & CPSM \\
\midrule 
0.05 &	0.3 &	0.573 $\pm$ 0.016 &	\cellcolor{red!25} 0.606 $\pm$ 0.011 &	0.599 $\pm$ 0.018 &	0.584 $\pm$ 0.032 &	0.589 $\pm$ 0.016 &	\cellcolor{blue!25} 0.606 $\pm$ 0.018 \\
 &	0.5 &	0.573 $\pm$ 0.01 &	0.609 $\pm$ 0.014 &	0.603 $\pm$ 0.01 &	0.596 $\pm$ 0.015 &	\cellcolor{red!25} 0.617 $\pm$ 0.011 &	\cellcolor{blue!25} 0.612 $\pm$ 0.013 \\
 &	0.7 &	0.579 $\pm$ 0.011 &	0.602 $\pm$ 0.014 &	0.606 $\pm$ 0.021 &	0.599 $\pm$ 0.022 &	\cellcolor{red!25} 0.637 $\pm$ 0.008 &	\cellcolor{blue!25} 0.622 $\pm$ 0.017 \\
\midrule
0.1 &	0.3 &	0.644 $\pm$ 0.014 &	\cellcolor{red!25} 0.682 $\pm$ 0.024 &	0.673 $\pm$ 0.024 &	0.659 $\pm$ 0.021 &	0.658 $\pm$ 0.019 &	\cellcolor{blue!25} 0.681 $\pm$ 0.011 \\
 &	0.5 &	0.642 $\pm$ 0.025 &	0.684 $\pm$ 0.033 &	0.685 $\pm$ 0.017 &	0.674 $\pm$ 0.015 &	\cellcolor{red!25} 0.749 $\pm$ 0.009 &	\cellcolor{blue!25} 0.702 $\pm$ 0.004 \\
 &	0.7 &	0.637 $\pm$ 0.02 &	0.685 $\pm$ 0.018 &	0.694 $\pm$ 0.022 &	0.677 $\pm$ 0.02 &	\cellcolor{red!25} 0.783 $\pm$ 0.015 &	\cellcolor{blue!25} 0.752 $\pm$ 0.015 \\
\midrule
0.2 &	0.3 &	0.721 $\pm$ 0.019 &	0.746 $\pm$ 0.034 &	0.76 $\pm$ 0.022 &	0.763 $\pm$ 0.022 &	\cellcolor{blue!25} 0.793 $\pm$ 0.008 &	\cellcolor{red!25} 0.799 $\pm$ 0.018 \\
 &	0.5 &	0.737 $\pm$ 0.023 &	0.793 $\pm$ 0.015 &	0.77 $\pm$ 0.017 &	0.769 $\pm$ 0.018 &	\cellcolor{red!25} 0.883 $\pm$ 0.011 &	\cellcolor{blue!25} 0.833 $\pm$ 0.009 \\
 &	0.7 &	0.737 $\pm$ 0.019 &	0.765 $\pm$ 0.029 &	0.789 $\pm$ 0.012 &	0.779 $\pm$ 0.034 &	\cellcolor{red!25} 0.912 $\pm$ 0.01 &	\cellcolor{blue!25} 0.882 $\pm$ 0.007 \\
\bottomrule
\end{tabular}
\begin{tabular}{l|l|llllll}
\multicolumn{8}{c}{Disease: DIABETES} \\
\midrule
a & k & NAIVE & ExTRA \cite{MaityYurochkinBanerjeeSun2023} & BBSC \cite{LiptonWangSmola2018} & MLLS  \cite{SaerensLatinneDecaestecker2002} & SEES \cite{ChenZahariaZou2024} & CPSM \\
\midrule
0.05 &	0.3 &	0.616 $\pm$ 0.012 &	0.63 $\pm$ 0.008 &	0.623 $\pm$ 0.01 &	\cellcolor{red!25} 0.634 $\pm$ 0.01 &	0.622 $\pm$ 0.007 &	\cellcolor{blue!25} 0.634 $\pm$ 0.014 \\
 &	0.5 &	0.616 $\pm$ 0.008 &	0.627 $\pm$ 0.015 &	0.629 $\pm$ 0.017 &	0.619 $\pm$ 0.016 &	\cellcolor{blue!25} 0.644 $\pm$ 0.011 &	\cellcolor{red!25} 0.648 $\pm$ 0.009 \\
 &	0.7 &	0.624 $\pm$ 0.014 &	0.631 $\pm$ 0.01 &	0.64 $\pm$ 0.009 &	0.644 $\pm$ 0.018 &	\cellcolor{red!25} 0.657 $\pm$ 0.02 &	\cellcolor{blue!25} 0.657 $\pm$ 0.013 \\
\midrule
0.1 &	0.3 &	0.691 $\pm$ 0.018 &	\cellcolor{red!25} 0.719 $\pm$ 0.011 &	0.696 $\pm$ 0.023 &	0.715 $\pm$ 0.012 &	0.717 $\pm$ 0.016 &	\cellcolor{blue!25} 0.718 $\pm$ 0.022 \\
 &	0.5 &	0.689 $\pm$ 0.009 &	0.71 $\pm$ 0.02 &	0.72 $\pm$ 0.02 &	0.712 $\pm$ 0.015 &	\cellcolor{red!25} 0.757 $\pm$ 0.019 &	\cellcolor{blue!25} 0.748 $\pm$ 0.016 \\
 &	0.7 &	0.698 $\pm$ 0.017 &	0.707 $\pm$ 0.014 &	0.732 $\pm$ 0.014 &	0.733 $\pm$ 0.022 &	\cellcolor{red!25} 0.794 $\pm$ 0.022 &	\cellcolor{blue!25} 0.785 $\pm$ 0.014 \\
\midrule
0.2 &	0.3 &	0.802 $\pm$ 0.017 &	\cellcolor{blue!25} 0.829 $\pm$ 0.022 &	0.806 $\pm$ 0.016 &	0.803 $\pm$ 0.017 &	\cellcolor{red!25} 0.832 $\pm$ 0.016 &	0.824 $\pm$ 0.019 \\
&	0.5 &	0.801 $\pm$ 0.019 &	0.808 $\pm$ 0.021 &	0.828 $\pm$ 0.011 &	0.818 $\pm$ 0.024 &	\cellcolor{red!25} 0.879 $\pm$ 0.012 &	\cellcolor{blue!25} 0.843 $\pm$ 0.019 \\
 &	0.7 &	0.787 $\pm$ 0.016 &	0.821 $\pm$ 0.028 &	0.84 $\pm$ 0.024 &	0.83 $\pm$ 0.013 &	\cellcolor{red!25} 0.913 $\pm$ 0.016 &	\cellcolor{blue!25} 0.885 $\pm$ 0.005 \\
\bottomrule
\end{tabular}
\begin{tabular}{l|l|llllll}
\multicolumn{8}{c}{Disease: KIDNEY} \\
\midrule
a & k & NAIVE & ExTRA \cite{MaityYurochkinBanerjeeSun2023} & BBSC \cite{LiptonWangSmola2018} & MLLS  \cite{SaerensLatinneDecaestecker2002} & SEES \cite{ChenZahariaZou2024} & CPSM \\
\midrule
0.05 &	0.3 &	0.653 $\pm$ 0.007 &	0.677 $\pm$ 0.011 &	0.692 $\pm$ 0.016 &	\cellcolor{blue!25} 0.701 $\pm$ 0.006 &	0.677 $\pm$ 0.013 &	\cellcolor{red!25} 0.71 $\pm$ 0.014 \\
 &	0.5 &	0.65 $\pm$ 0.01 &	0.675 $\pm$ 0.01 &	0.709 $\pm$ 0.016 &	\cellcolor{blue!25} 0.712 $\pm$ 0.014 &	0.702 $\pm$ 0.016 &	\cellcolor{red!25} 0.733 $\pm$ 0.014 \\
 &	0.7 &	0.651 $\pm$ 0.015 &	0.674 $\pm$ 0.012 &	0.708 $\pm$ 0.012 &	\cellcolor{blue!25} 0.719 $\pm$ 0.015 &	0.699 $\pm$ 0.013 &	\cellcolor{red!25} 0.764 $\pm$ 0.023 \\
\midrule
0.1 &	0.3 &	0.747 $\pm$ 0.01 &	0.765 $\pm$ 0.008 &	\cellcolor{blue!25} 0.788 $\pm$ 0.016 &	0.78 $\pm$ 0.011 &	0.78 $\pm$ 0.012 &	\cellcolor{red!25} 0.797 $\pm$ 0.012 \\
 &	0.5 &	0.749 $\pm$ 0.013 &	0.765 $\pm$ 0.018 &	0.81 $\pm$ 0.004 &	0.798 $\pm$ 0.011 &	\cellcolor{blue!25} 0.83 $\pm$ 0.009 &	\cellcolor{red!25} 0.836 $\pm$ 0.015 \\
 &	0.7 &	0.745 $\pm$ 0.013 &	0.761 $\pm$ 0.011 &	0.811 $\pm$ 0.019 &	0.803 $\pm$ 0.011 &	\cellcolor{blue!25} 0.851 $\pm$ 0.014 &	\cellcolor{red!25} 0.883 $\pm$ 0.009 \\
\midrule
0.2 &	0.3 &	0.838 $\pm$ 0.017 &	0.858 $\pm$ 0.008 &	0.857 $\pm$ 0.009 &	0.858 $\pm$ 0.019 &	\cellcolor{red!25} 0.868 $\pm$ 0.014 &	\cellcolor{blue!25} 0.867 $\pm$ 0.007 \\
 &	0.5 &	0.841 $\pm$ 0.012 &	0.842 $\pm$ 0.037 &	0.867 $\pm$ 0.004 &	0.863 $\pm$ 0.008 &	\cellcolor{red!25} 0.911 $\pm$ 0.007 &	\cellcolor{blue!25} 0.895 $\pm$ 0.008 \\
 &	0.7 &	0.843 $\pm$ 0.018 &	0.861 $\pm$ 0.016 &	0.875 $\pm$ 0.005 &	0.869 $\pm$ 0.009 &	\cellcolor{red!25} 0.94 $\pm$ 0.007 &	\cellcolor{blue!25} 0.922 $\pm$ 0.004 \\
\bottomrule
\end{tabular}
\begin{tabular}{l|l|llllll}
\multicolumn{8}{c}{Disease: FLUID} \\
\midrule
a & k & NAIVE & ExTRA \cite{MaityYurochkinBanerjeeSun2023} & BBSC \cite{LiptonWangSmola2018} & MLLS  \cite{SaerensLatinneDecaestecker2002} & SEES \cite{ChenZahariaZou2024} & CPSM \\
\midrule
0.05 &	0.3 &	0.564 $\pm$ 0.016 &	\cellcolor{red!25} 0.594 $\pm$ 0.003 &	0.573 $\pm$ 0.012 &	0.562 $\pm$ 0.02 &	0.574 $\pm$ 0.009 &	\cellcolor{blue!25} 0.58 $\pm$ 0.015 \\
 &	0.5 &	0.551 $\pm$ 0.011 &	\cellcolor{red!25} 0.591 $\pm$ 0.011 &	0.576 $\pm$ 0.017 &	0.572 $\pm$ 0.023 &	0.577 $\pm$ 0.013 &	\cellcolor{blue!25} 0.58 $\pm$ 0.007 \\
 &	0.7 &	0.561 $\pm$ 0.011 &	0.586 $\pm$ 0.016 &	0.578 $\pm$ 0.015 &	0.572 $\pm$ 0.009 &	\cellcolor{blue!25} 0.591 $\pm$ 0.009 &	\cellcolor{red!25} 0.594 $\pm$ 0.01 \\
\midrule
0.1 &	0.3 &	0.623 $\pm$ 0.024 &	\cellcolor{red!25} 0.673 $\pm$ 0.008 &	0.648 $\pm$ 0.02 &	0.641 $\pm$ 0.027 &	0.65 $\pm$ 0.004 &	\cellcolor{blue!25} 0.655 $\pm$ 0.023 \\
&	0.5 &	0.619 $\pm$ 0.018 &	0.645 $\pm$ 0.026 &	0.656 $\pm$ 0.013 &	0.664 $\pm$ 0.028 &	\cellcolor{red!25} 0.682 $\pm$ 0.016 &	\cellcolor{blue!25} 0.68 $\pm$ 0.02 \\
 &	0.7 &	0.618 $\pm$ 0.027 &	0.654 $\pm$ 0.004 &	0.674 $\pm$ 0.008 &	0.664 $\pm$ 0.025 &	\cellcolor{red!25} 0.706 $\pm$ 0.016 &	\cellcolor{blue!25} 0.699 $\pm$ 0.01 \\
\midrule
0.2 &	0.3 &	0.75 $\pm$ 0.017 &	0.764 $\pm$ 0.021 &	0.751 $\pm$ 0.031 &	0.753 $\pm$ 0.018 &	\cellcolor{red!25} 0.795 $\pm$ 0.024 &	\cellcolor{blue!25} 0.789 $\pm$ 0.016 \\
 &	0.5 &	0.753 $\pm$ 0.032 &	0.764 $\pm$ 0.033 &	0.79 $\pm$ 0.014 &	0.775 $\pm$ 0.007 &	\cellcolor{red!25} 0.848 $\pm$ 0.008 &	\cellcolor{blue!25} 0.817 $\pm$ 0.016 \\
 &	0.7 &	0.741 $\pm$ 0.025 &	0.777 $\pm$ 0.012 &	0.793 $\pm$ 0.032 &	0.777 $\pm$ 0.02 &	\cellcolor{red!25} 0.894 $\pm$ 0.017 &	\cellcolor{blue!25} 0.866 $\pm$ 0.014 \\
\bottomrule
\end{tabular}
\end{table}


\begin{table}[ht!]
\caption{{\bf Approximation  Error} for MIMIC case study, for the {\bf conditioning variable: age}. Four diseases are considered: COPD, DIABETES, KIDNEY and FLUID. The method with the  smallest Approximation Error is in red and the second best in blue. {\bf Neural Network} is used as a base classifier.}
\label{tab:error_dnn}
\begin{tabular}{l|l|llllll}
\toprule 
\multicolumn{8}{c}{Disease: COPD} \\
\midrule
a & k & NAIVE & ExTRA \cite{MaityYurochkinBanerjeeSun2023} & BBSC \cite{LiptonWangSmola2018} & MLLS  \cite{SaerensLatinneDecaestecker2002} & SEES \cite{ChenZahariaZou2024} & CPSM \\
\midrule 
0.05 &	0.3 &	0.236 $\pm$ 0.004 &	0.249 $\pm$ 0.008 &	\cellcolor{blue!25} 0.234 $\pm$ 0.005 &	0.234 $\pm$ 0.006 &	0.235 $\pm$ 0.004 &	\cellcolor{red!25} 0.225 $\pm$ 0.006 \\
 &	0.5 &	0.283 $\pm$ 0.003 &	0.282 $\pm$ 0.009 &	0.274 $\pm$ 0.007 &	\cellcolor{blue!25} 0.268 $\pm$ 0.003 &	0.268 $\pm$ 0.008 &	\cellcolor{red!25} 0.258 $\pm$ 0.007 \\
 &	0.7 &	0.309 $\pm$ 0.002 &	0.312 $\pm$ 0.007 &	0.305 $\pm$ 0.012 &	0.295 $\pm$ 0.013 &	\cellcolor{blue!25} 0.273 $\pm$ 0.008 &	\cellcolor{red!25} 0.271 $\pm$ 0.012 \\
\midrule
0.1 &	0.3 &	0.196 $\pm$ 0.007 &	0.216 $\pm$ 0.012 &	0.186 $\pm$ 0.007 &	0.186 $\pm$ 0.007 &	\cellcolor{blue!25} 0.171 $\pm$ 0.003 &	\cellcolor{red!25} 0.158 $\pm$ 0.008 \\
 &	0.5 &	0.234 $\pm$ 0.01 &	0.243 $\pm$ 0.019 &	0.222 $\pm$ 0.009 &	0.215 $\pm$ 0.009 &	\cellcolor{red!25} 0.171 $\pm$ 0.011 &	\cellcolor{blue!25} 0.18 $\pm$ 0.007 \\
 &	0.7 &	0.265 $\pm$ 0.01 &	0.253 $\pm$ 0.01 &	0.245 $\pm$ 0.011 &	0.247 $\pm$ 0.007 &	\cellcolor{red!25} 0.165 $\pm$ 0.013 &	\cellcolor{blue!25} 0.18 $\pm$ 0.018 \\
\midrule
0.2 &	0.3 &	0.151 $\pm$ 0.011 &	0.194 $\pm$ 0.028 &	0.154 $\pm$ 0.005 &	0.154 $\pm$ 0.005 &	\cellcolor{red!25} 0.102 $\pm$ 0.007 &	\cellcolor{blue!25} 0.128 $\pm$ 0.005 \\
 &	0.5 &	0.188 $\pm$ 0.009 &	0.186 $\pm$ 0.015 &	0.188 $\pm$ 0.008 &	0.205 $\pm$ 0.011 &	\cellcolor{red!25} 0.097 $\pm$ 0.004 &	\cellcolor{blue!25} 0.117 $\pm$ 0.011 \\
 &	0.7 &	0.215 $\pm$ 0.008 &	0.212 $\pm$ 0.016 &	0.213 $\pm$ 0.009 &	0.239 $\pm$ 0.029 &	\cellcolor{red!25} 0.077 $\pm$ 0.008 &	\cellcolor{blue!25} 0.102 $\pm$ 0.004 \\
\bottomrule
\end{tabular}
\begin{tabular}{l|l|llllll}
\multicolumn{8}{c}{Disease: DIABETES} \\
\midrule
a & k & NAIVE & ExTRA \cite{MaityYurochkinBanerjeeSun2023} & BBSC \cite{LiptonWangSmola2018} & MLLS  \cite{SaerensLatinneDecaestecker2002} & SEES \cite{ChenZahariaZou2024} & CPSM \\
\midrule
0.05 &	0.3 &	0.26 $\pm$ 0.007 &	0.264 $\pm$ 0.006 &	0.252 $\pm$ 0.008 &	\cellcolor{blue!25} 0.245 $\pm$ 0.014 &	0.252 $\pm$ 0.008 &	\cellcolor{red!25} 0.239 $\pm$ 0.012 \\
 &	0.5 &	0.306 $\pm$ 0.01 &	0.312 $\pm$ 0.017 &	0.297 $\pm$ 0.016 &	0.298 $\pm$ 0.01 &	\cellcolor{blue!25} 0.29 $\pm$ 0.015 &	\cellcolor{red!25} 0.285 $\pm$ 0.01 \\
 &	0.7 &	0.338 $\pm$ 0.019 &	0.336 $\pm$ 0.016 &	0.326 $\pm$ 0.013 &	0.317 $\pm$ 0.012 &	\cellcolor{blue!25} 0.306 $\pm$ 0.024 &	\cellcolor{red!25} 0.301 $\pm$ 0.009 \\
\midrule
0.1 &	0.3 &	0.202 $\pm$ 0.012 &	0.218 $\pm$ 0.01 &	0.197 $\pm$ 0.012 &	\cellcolor{blue!25} 0.187 $\pm$ 0.009 &	0.188 $\pm$ 0.015 &	\cellcolor{red!25} 0.171 $\pm$ 0.012 \\
 &	0.5 &	0.243 $\pm$ 0.01 &	0.252 $\pm$ 0.015 &	0.231 $\pm$ 0.009 &	0.223 $\pm$ 0.009 &	\cellcolor{blue!25} 0.196 $\pm$ 0.018 &	\cellcolor{red!25} 0.192 $\pm$ 0.014 \\
 &	0.7 &	0.267 $\pm$ 0.011 &	0.271 $\pm$ 0.012 &	0.25 $\pm$ 0.01 &	0.242 $\pm$ 0.013 &	\cellcolor{blue!25} 0.189 $\pm$ 0.018 &	\cellcolor{red!25} 0.182 $\pm$ 0.014 \\
\midrule
0.2 &	0.3 &	0.133 $\pm$ 0.007 &	0.157 $\pm$ 0.007 &	0.137 $\pm$ 0.004 &	0.132 $\pm$ 0.01 &	\cellcolor{blue!25} 0.115 $\pm$ 0.008 &	\cellcolor{red!25} 0.109 $\pm$ 0.008 \\
 &	0.5 &	0.168 $\pm$ 0.013 &	0.185 $\pm$ 0.011 &	0.156 $\pm$ 0.011 &	0.155 $\pm$ 0.014 &	\cellcolor{red!25} 0.1 $\pm$ 0.008 &	\cellcolor{blue!25} 0.107 $\pm$ 0.002 \\
 &	0.7 &	0.199 $\pm$ 0.01 &	0.186 $\pm$ 0.029 &	0.175 $\pm$ 0.018 &	0.18 $\pm$ 0.01 &	\cellcolor{red!25} 0.081 $\pm$ 0.016 &	\cellcolor{blue!25} 0.1 $\pm$ 0.009 \\
\bottomrule
\end{tabular}
\begin{tabular}{l|l|llllll}
\multicolumn{8}{c}{Disease: KIDNEY} \\
\midrule
a & k & NAIVE & ExTRA \cite{MaityYurochkinBanerjeeSun2023} & BBSC \cite{LiptonWangSmola2018} & MLLS  \cite{SaerensLatinneDecaestecker2002} & SEES \cite{ChenZahariaZou2024} & CPSM \\
\midrule
0.05 &	0.3 &	0.217 $\pm$ 0.007 &	0.214 $\pm$ 0.006 &	0.194 $\pm$ 0.011 &	\cellcolor{blue!25} 0.188 $\pm$ 0.008 &	0.205 $\pm$ 0.01 &	\cellcolor{red!25} 0.17 $\pm$ 0.018 \\
 &	0.5 &	0.267 $\pm$ 0.011 &	0.257 $\pm$ 0.009 &	0.232 $\pm$ 0.005 &	\cellcolor{blue!25} 0.223 $\pm$ 0.017 &	0.23 $\pm$ 0.009 &	\cellcolor{red!25} 0.201 $\pm$ 0.011 \\
 &	0.7 &	0.297 $\pm$ 0.018 &	0.286 $\pm$ 0.011 &	0.262 $\pm$ 0.007 &	\cellcolor{blue!25} 0.237 $\pm$ 0.012 &	0.261 $\pm$ 0.012 &	\cellcolor{red!25} 0.198 $\pm$ 0.021 \\
\midrule
0.1 &	0.3 &	0.159 $\pm$ 0.006 &	0.163 $\pm$ 0.005 &	0.138 $\pm$ 0.01 &	\cellcolor{blue!25} 0.128 $\pm$ 0.007 &	0.132 $\pm$ 0.004 &	\cellcolor{red!25} 0.116 $\pm$ 0.012 \\
 &	0.5 &	0.198 $\pm$ 0.007 &	0.194 $\pm$ 0.012 &	0.157 $\pm$ 0.003 &	0.152 $\pm$ 0.005 &	\cellcolor{blue!25} 0.132 $\pm$ 0.009 &	\cellcolor{red!25} 0.115 $\pm$ 0.013 \\
 &	0.7 &	0.221 $\pm$ 0.012 &	0.217 $\pm$ 0.009 &	0.181 $\pm$ 0.011 &	0.176 $\pm$ 0.008 &	\cellcolor{blue!25} 0.131 $\pm$ 0.014 &	\cellcolor{red!25} 0.098 $\pm$ 0.007 \\
\midrule
0.2 &	0.3 &	0.102 $\pm$ 0.002 &	0.104 $\pm$ 0.008 &	0.092 $\pm$ 0.005 &	0.088 $\pm$ 0.003 &	\cellcolor{blue!25} 0.073 $\pm$ 0.007 &	\cellcolor{red!25} 0.068 $\pm$ 0.002 \\
 &	0.5 &	0.131 $\pm$ 0.002 &	0.135 $\pm$ 0.025 &	0.119 $\pm$ 0.006 &	0.112 $\pm$ 0.004 &	\cellcolor{red!25} 0.062 $\pm$ 0.009 &	\cellcolor{blue!25} 0.067 $\pm$ 0.005 \\
 &	0.7 &	0.155 $\pm$ 0.01 &	0.142 $\pm$ 0.01 &	0.139 $\pm$ 0.008 &	0.137 $\pm$ 0.007 &	\cellcolor{red!25} 0.045 $\pm$ 0.005 &	\cellcolor{blue!25} 0.057 $\pm$ 0.003 \\
\bottomrule
\end{tabular}
\begin{tabular}{l|l|llllll}
\multicolumn{8}{c}{Disease: FLUID} \\
\midrule
a & k & NAIVE & ExTRA \cite{MaityYurochkinBanerjeeSun2023} & BBSC \cite{LiptonWangSmola2018} & MLLS  \cite{SaerensLatinneDecaestecker2002} & SEES \cite{ChenZahariaZou2024} & CPSM \\
\midrule
0.05 &	0.3 &	0.29 $\pm$ 0.005 &	0.294 $\pm$ 0.004 &	\cellcolor{blue!25} 0.289 $\pm$ 0.009 &	0.293 $\pm$ 0.01 &	0.292 $\pm$ 0.003 &	\cellcolor{red!25} 0.281 $\pm$ 0.008 \\
 &	0.5 &	0.346 $\pm$ 0.007 &	0.343 $\pm$ 0.01 &	0.338 $\pm$ 0.014 &	\cellcolor{blue!25} 0.334 $\pm$ 0.016 &	0.343 $\pm$ 0.013 &	\cellcolor{red!25} 0.321 $\pm$ 0.021 \\
 &	0.7 &	0.387 $\pm$ 0.015 &	0.38 $\pm$ 0.013 &	0.379 $\pm$ 0.013 &	0.377 $\pm$ 0.008 &	\cellcolor{blue!25} 0.371 $\pm$ 0.016 &	\cellcolor{red!25} 0.353 $\pm$ 0.011 \\
\midrule
0.1 &	0.3 &	0.242 $\pm$ 0.011 &	0.25 $\pm$ 0.008 &	\cellcolor{blue!25} 0.228 $\pm$ 0.008 &	0.236 $\pm$ 0.01 &	0.232 $\pm$ 0.007 &	\cellcolor{red!25} 0.217 $\pm$ 0.016 \\
 &	0.5 &	0.293 $\pm$ 0.009 &	0.301 $\pm$ 0.019 &	0.274 $\pm$ 0.007 &	0.26 $\pm$ 0.015 &	\cellcolor{blue!25} 0.256 $\pm$ 0.013 &	\cellcolor{red!25} 0.235 $\pm$ 0.016 \\
 &	0.7 &	0.333 $\pm$ 0.02 &	0.32 $\pm$ 0.007 &	0.303 $\pm$ 0.011 &	0.298 $\pm$ 0.01 &	\cellcolor{blue!25} 0.269 $\pm$ 0.021 &	\cellcolor{red!25} 0.25 $\pm$ 0.009 \\
\midrule
0.2 &	0.3 &	0.169 $\pm$ 0.009 &	0.196 $\pm$ 0.02 &	0.16 $\pm$ 0.008 &	0.16 $\pm$ 0.012 &	\cellcolor{blue!25} 0.141 $\pm$ 0.008 &	\cellcolor{red!25} 0.129 $\pm$ 0.006 \\
 &	0.5 &	0.208 $\pm$ 0.016 &	0.21 $\pm$ 0.011 &	0.196 $\pm$ 0.01 &	0.189 $\pm$ 0.009 &	\cellcolor{red!25} 0.123 $\pm$ 0.004 &	\cellcolor{blue!25} 0.125 $\pm$ 0.009 \\
 &	0.7 &	0.239 $\pm$ 0.015 &	0.232 $\pm$ 0.008 &	0.223 $\pm$ 0.018 &	0.227 $\pm$ 0.016 &	\cellcolor{red!25} 0.095 $\pm$ 0.012 &	\cellcolor{blue!25} 0.119 $\pm$ 0.01 \\
\bottomrule
\end{tabular}
\end{table}


\begin{table}[ht!]
\caption{{\bf Balanced Accuracy score} for MIMIC case study, for the {\bf conditioning variable: gender}. Four diseases are considered: COPD, DIABETES, KIDNEY and FLUID. The method with the highest Balanced Accuracy is in red and the second best in blue. {\bf Logistic regression} is used as a base classifier.}
\label{tab:acc_logit_sex}
\begin{tabular}{l|l|llllll}
\toprule 
\multicolumn{8}{c}{Disease: COPD} \\
\midrule
a & k & NAIVE & ExTRA \cite{MaityYurochkinBanerjeeSun2023} & BBSC \cite{LiptonWangSmola2018} & MLLS  \cite{SaerensLatinneDecaestecker2002} & SEES \cite{ChenZahariaZou2024} & CPSM \\
\midrule 
0.05 &	0.3 &	0.51 $\pm$ 0.004 &	0.555 $\pm$ 0.029 &	0.553 $\pm$ 0.01 &	\cellcolor{blue!25} 0.566 $\pm$ 0.016 &	0.52 $\pm$ 0.006 &	\cellcolor{red!25} 0.593 $\pm$ 0.005 \\
 &	0.5 &	0.51 $\pm$ 0.004 &	0.559 $\pm$ 0.03 &	\cellcolor{blue!25} 0.577 $\pm$ 0.017 &	0.572 $\pm$ 0.016 &	0.548 $\pm$ 0.023 &	\cellcolor{red!25} 0.651 $\pm$ 0.017 \\
 &	0.7 &	0.511 $\pm$ 0.005 &	0.573 $\pm$ 0.016 &	\cellcolor{blue!25} 0.579 $\pm$ 0.013 &	0.578 $\pm$ 0.013 &	0.57 $\pm$ 0.032 &	\cellcolor{red!25} 0.704 $\pm$ 0.02 \\
\midrule
0.1 &	0.3 &	0.53 $\pm$ 0.006 &	\cellcolor{blue!25} 0.589 $\pm$ 0.022 &	0.563 $\pm$ 0.008 &	0.578 $\pm$ 0.01 &	0.561 $\pm$ 0.011 &	\cellcolor{red!25} 0.616 $\pm$ 0.008 \\
 &	0.5 &	0.527 $\pm$ 0.005 &	0.567 $\pm$ 0.024 &	0.592 $\pm$ 0.009 &	0.591 $\pm$ 0.014 &	\cellcolor{blue!25} 0.644 $\pm$ 0.022 &	\cellcolor{red!25} 0.693 $\pm$ 0.017 \\
 &	0.7 &	0.532 $\pm$ 0.009 &	0.546 $\pm$ 0.029 &	0.588 $\pm$ 0.008 &	0.587 $\pm$ 0.016 &	\cellcolor{blue!25} 0.716 $\pm$ 0.029 &	\cellcolor{red!25} 0.741 $\pm$ 0.015 \\
\midrule
0.2 &	0.3 &	0.575 $\pm$ 0.008 &	0.565 $\pm$ 0.037 &	0.579 $\pm$ 0.011 &	0.579 $\pm$ 0.009 &	\cellcolor{blue!25} 0.609 $\pm$ 0.004 &	\cellcolor{red!25} 0.633 $\pm$ 0.016 \\
 &	0.5 &	0.578 $\pm$ 0.007 &	0.575 $\pm$ 0.036 &	0.589 $\pm$ 0.01 &	0.59 $\pm$ 0.018 &	\cellcolor{red!25} 0.728 $\pm$ 0.033 &	\cellcolor{blue!25} 0.711 $\pm$ 0.017 \\
 &	0.7 &	0.579 $\pm$ 0.012 &	0.582 $\pm$ 0.051 &	0.605 $\pm$ 0.014 &	0.611 $\pm$ 0.011 &	\cellcolor{red!25} 0.812 $\pm$ 0.007 &	\cellcolor{blue!25} 0.776 $\pm$ 0.012 \\
\bottomrule
\end{tabular}
\begin{tabular}{l|l|llllll}
\multicolumn{8}{c}{Disease: DIABETES} \\
\midrule
a & k & NAIVE & ExTRA \cite{MaityYurochkinBanerjeeSun2023} & BBSC \cite{LiptonWangSmola2018} & MLLS  \cite{SaerensLatinneDecaestecker2002} & SEES \cite{ChenZahariaZou2024} & CPSM \\
\midrule
0.05 &	0.3 &	0.555 $\pm$ 0.014 &	0.582 $\pm$ 0.016 &	\cellcolor{blue!25} 0.631 $\pm$ 0.012 &	0.622 $\pm$ 0.019 &	0.574 $\pm$ 0.029 &	\cellcolor{red!25} 0.652 $\pm$ 0.03 \\
 &	0.5 &	0.553 $\pm$ 0.007 &	0.571 $\pm$ 0.043 &	\cellcolor{blue!25} 0.645 $\pm$ 0.027 &	0.629 $\pm$ 0.022 &	0.632 $\pm$ 0.042 &	\cellcolor{red!25} 0.734 $\pm$ 0.036 \\
 &	0.7 &	0.549 $\pm$ 0.015 &	0.582 $\pm$ 0.032 &	0.642 $\pm$ 0.021 &	0.642 $\pm$ 0.024 &	\cellcolor{blue!25} 0.716 $\pm$ 0.044 &	\cellcolor{red!25} 0.793 $\pm$ 0.04 \\
\midrule
0.1 &	0.3 &	0.586 $\pm$ 0.017 &	0.589 $\pm$ 0.019 &	0.642 $\pm$ 0.019 &	\cellcolor{blue!25} 0.646 $\pm$ 0.018 &	0.635 $\pm$ 0.026 &	\cellcolor{red!25} 0.678 $\pm$ 0.025 \\
 &	0.5 &	0.587 $\pm$ 0.015 &	0.646 $\pm$ 0.043 &	0.661 $\pm$ 0.026 &	0.65 $\pm$ 0.016 &	\cellcolor{blue!25} 0.734 $\pm$ 0.03 &	\cellcolor{red!25} 0.76 $\pm$ 0.03 \\
 &	0.7 &	0.578 $\pm$ 0.018 &	0.612 $\pm$ 0.017 &	0.668 $\pm$ 0.018 &	0.67 $\pm$ 0.022 &	\cellcolor{red!25} 0.829 $\pm$ 0.011 &	\cellcolor{blue!25} 0.825 $\pm$ 0.015 \\
\midrule
0.2 &	0.3 &	0.644 $\pm$ 0.006 &	0.663 $\pm$ 0.019 &	0.649 $\pm$ 0.025 &	0.649 $\pm$ 0.018 &	\cellcolor{blue!25} 0.677 $\pm$ 0.009 &	\cellcolor{red!25} 0.691 $\pm$ 0.023 \\
 &	0.5 &	0.642 $\pm$ 0.015 &	0.635 $\pm$ 0.054 &	0.664 $\pm$ 0.014 &	0.665 $\pm$ 0.014 &	\cellcolor{red!25} 0.801 $\pm$ 0.018 &	\cellcolor{blue!25} 0.789 $\pm$ 0.022 \\
 &	0.7 &	0.645 $\pm$ 0.012 &	0.67 $\pm$ 0.032 &	0.674 $\pm$ 0.019 &	0.68 $\pm$ 0.013 &	\cellcolor{red!25} 0.854 $\pm$ 0.006 &	\cellcolor{blue!25} 0.845 $\pm$ 0.009 \\
\bottomrule
\end{tabular}
\begin{tabular}{l|l|llllll}
\multicolumn{8}{c}{Disease: KIDNEY} \\
\midrule
a & k & NAIVE & ExTRA \cite{MaityYurochkinBanerjeeSun2023} & BBSC \cite{LiptonWangSmola2018} & MLLS  \cite{SaerensLatinneDecaestecker2002} & SEES \cite{ChenZahariaZou2024} & CPSM \\
\midrule
0.05 &	0.3 &	0.6 $\pm$ 0.014 &	0.661 $\pm$ 0.022 &	\cellcolor{blue!25} 0.728 $\pm$ 0.015 &	0.706 $\pm$ 0.019 &	0.698 $\pm$ 0.033 &	\cellcolor{red!25} 0.78 $\pm$ 0.018 \\
 &	0.5 &	0.604 $\pm$ 0.02 &	0.677 $\pm$ 0.027 &	\cellcolor{blue!25} 0.746 $\pm$ 0.011 &	0.727 $\pm$ 0.015 &	0.742 $\pm$ 0.03 &	\cellcolor{red!25} 0.818 $\pm$ 0.012 \\
 &	0.7 &	0.606 $\pm$ 0.013 &	0.648 $\pm$ 0.021 &	0.722 $\pm$ 0.005 &	0.734 $\pm$ 0.019 &	\cellcolor{blue!25} 0.795 $\pm$ 0.025 &	\cellcolor{red!25} 0.847 $\pm$ 0.008 \\
\midrule
0.1 &	0.3 &	0.664 $\pm$ 0.013 &	0.714 $\pm$ 0.033 &	0.744 $\pm$ 0.014 &	0.735 $\pm$ 0.013 &	\cellcolor{blue!25} 0.766 $\pm$ 0.025 &	\cellcolor{red!25} 0.789 $\pm$ 0.015 \\
 &	0.5 &	0.665 $\pm$ 0.012 &	0.703 $\pm$ 0.035 &	0.761 $\pm$ 0.019 &	0.754 $\pm$ 0.015 &	\cellcolor{blue!25} 0.823 $\pm$ 0.006 &	\cellcolor{red!25} 0.836 $\pm$ 0.009 \\
 &	0.7 &	0.663 $\pm$ 0.008 &	0.673 $\pm$ 0.04 &	0.758 $\pm$ 0.009 &	0.753 $\pm$ 0.013 &	\cellcolor{blue!25} 0.861 $\pm$ 0.015 &	\cellcolor{red!25} 0.865 $\pm$ 0.008 \\
\midrule
0.2 &	0.3 &	0.741 $\pm$ 0.013 &	0.675 $\pm$ 0.205 &	0.755 $\pm$ 0.009 &	0.757 $\pm$ 0.011 &	\cellcolor{blue!25} 0.793 $\pm$ 0.013 &	\cellcolor{red!25} 0.797 $\pm$ 0.009 \\
 &	0.5 &	0.745 $\pm$ 0.009 &	0.757 $\pm$ 0.014 &	0.774 $\pm$ 0.012 &	0.768 $\pm$ 0.015 &	\cellcolor{blue!25} 0.848 $\pm$ 0.007 &	\cellcolor{red!25} 0.849 $\pm$ 0.006 \\
 &	0.7 &	0.745 $\pm$ 0.007 &	0.774 $\pm$ 0.024 &	0.777 $\pm$ 0.011 &	0.773 $\pm$ 0.007 &	\cellcolor{red!25} 0.876 $\pm$ 0.01 &	\cellcolor{blue!25} 0.875 $\pm$ 0.008 \\
\bottomrule
\end{tabular}
\begin{tabular}{l|l|llllll}
\multicolumn{8}{c}{Disease: FLUID} \\
\midrule
a & k & NAIVE & ExTRA \cite{MaityYurochkinBanerjeeSun2023} & BBSC \cite{LiptonWangSmola2018} & MLLS  \cite{SaerensLatinneDecaestecker2002} & SEES \cite{ChenZahariaZou2024} & CPSM \\
\midrule
0.05 &	0.3 &	0.509 $\pm$ 0.002 &	\cellcolor{blue!25} 0.551 $\pm$ 0.017 &	0.534 $\pm$ 0.006 &	0.55 $\pm$ 0.012 &	0.508 $\pm$ 0.005 &	\cellcolor{red!25} 0.561 $\pm$ 0.011 \\
 &	0.5 &	0.508 $\pm$ 0.003 &	0.537 $\pm$ 0.023 &	\cellcolor{blue!25} 0.553 $\pm$ 0.011 &	0.548 $\pm$ 0.012 &	0.513 $\pm$ 0.005 &	\cellcolor{red!25} 0.635 $\pm$ 0.022 \\
 &	0.7 &	0.508 $\pm$ 0.002 &	0.566 $\pm$ 0.009 &	\cellcolor{blue!25} 0.575 $\pm$ 0.017 &	0.553 $\pm$ 0.01 &	0.529 $\pm$ 0.014 &	\cellcolor{red!25} 0.743 $\pm$ 0.031 \\
\midrule
0.1 &	0.3 &	0.521 $\pm$ 0.006 &	0.544 $\pm$ 0.016 &	0.557 $\pm$ 0.007 &	\cellcolor{blue!25} 0.561 $\pm$ 0.01 &	0.546 $\pm$ 0.011 &	\cellcolor{red!25} 0.598 $\pm$ 0.016 \\
 &	0.5 &	0.52 $\pm$ 0.002 &	0.527 $\pm$ 0.06 &	0.568 $\pm$ 0.011 &	\cellcolor{blue!25} 0.571 $\pm$ 0.005 &	0.57 $\pm$ 0.019 &	\cellcolor{red!25} 0.707 $\pm$ 0.039 \\
 &	0.7 &	0.523 $\pm$ 0.006 &	0.576 $\pm$ 0.023 &	0.586 $\pm$ 0.01 &	0.576 $\pm$ 0.004 &	\cellcolor{blue!25} 
0.712 $\pm$ 0.024 &	\cellcolor{red!25} 0.792 $\pm$ 0.028 \\
\midrule
0.2 &	0.3 &	0.57 $\pm$ 0.013 &	0.522 $\pm$ 0.08 &	0.572 $\pm$ 0.012 &	0.57 $\pm$ 0.014 &	\cellcolor{blue!25} 0.607 $\pm$ 0.012 &	\cellcolor{red!25} 0.624 $\pm$ 0.012 \\
 &	0.5 &	0.567 $\pm$ 0.01 &	0.584 $\pm$ 0.06 &	0.598 $\pm$ 0.006 &	0.587 $\pm$ 0.01 &	\cellcolor{blue!25} 0.683 $\pm$ 0.033 &	\cellcolor{red!25} 0.735 $\pm$ 0.022 \\
 &	0.7 &	0.567 $\pm$ 0.008 &	0.569 $\pm$ 0.039 &	0.6 $\pm$ 0.011 &	0.603 $\pm$ 0.011 &	\cellcolor{red!25} 0.831 $\pm$ 0.011 &	\cellcolor{blue!25} 0.828 $\pm$ 0.011 \\
\bottomrule
\end{tabular}
\end{table}


\begin{table}[ht!]
\caption{{\bf Approximation  Error} for MIMIC case study, for the {\bf conditioning variable: gender}. Four diseases are considered: COPD, DIABETES, KIDNEY and FLUID. The method with the smallest approximation  error is in red and the second best in blue. {\bf Logistic regression} is used as a base classifier.}
\label{tab:error_logit_sex}
\begin{tabular}{l|l|llllll}
\toprule 
\multicolumn{8}{c}{Disease: COPD} \\
\midrule
a & k & NAIVE & ExTRA \cite{MaityYurochkinBanerjeeSun2023} & BBSC \cite{LiptonWangSmola2018} & MLLS  \cite{SaerensLatinneDecaestecker2002} & SEES \cite{ChenZahariaZou2024} & CPSM \\
\midrule 
0.05 &	0.3 &	0.149 $\pm$ 0.004 &	0.131 $\pm$ 0.01 &	0.117 $\pm$ 0.004 &	\cellcolor{blue!25} 0.112 $\pm$ 0.003 &	0.148 $\pm$ 0.005 &	\cellcolor{red!25} 0.074 $\pm$ 0.006 \\
 &	0.5 &	0.173 $\pm$ 0.005 &	0.173 $\pm$ 0.037 &	0.161 $\pm$ 0.011 &	\cellcolor{blue!25} 0.152 $\pm$ 0.009 &	0.157 $\pm$ 0.023 &	\cellcolor{red!25} 0.073 $\pm$ 0.015 \\
 &	0.7 &	0.188 $\pm$ 0.006 &	0.176 $\pm$ 0.014 &	0.194 $\pm$ 0.024 &	0.178 $\pm$ 0.013 &	\cellcolor{blue!25} 0.168 $\pm$ 0.011 &	\cellcolor{red!25} 0.087 $\pm$ 0.013 \\
\midrule
0.1 &	0.3 &	0.11 $\pm$ 0.004 &	0.111 $\pm$ 0.017 &	0.117 $\pm$ 0.006 &	0.117 $\pm$ 0.005 &	\cellcolor{blue!25} 0.091 $\pm$ 0.009 &	\cellcolor{red!25} 0.058 $\pm$ 0.003 \\
 &	0.5 &	0.137 $\pm$ 0.006 &	0.153 $\pm$ 0.013 &	0.164 $\pm$ 0.01 &	0.165 $\pm$ 0.008 &	\cellcolor{blue!25} 0.083 $\pm$ 0.015 &	\cellcolor{red!25} 0.058 $\pm$ 0.009 \\
 &	0.7 &	0.153 $\pm$ 0.005 &	0.188 $\pm$ 0.019 &	0.197 $\pm$ 0.013 &	0.182 $\pm$ 0.012 &	\cellcolor{blue!25} 0.086 $\pm$ 0.008 &	\cellcolor{red!25} 0.073 $\pm$ 0.007 \\
\midrule
0.2 &	0.3 &	0.12 $\pm$ 0.004 &	0.146 $\pm$ 0.034 &	0.119 $\pm$ 0.002 &	0.119 $\pm$ 0.002 &	\cellcolor{red!25} 0.037 $\pm$ 0.003 &	\cellcolor{blue!25} 0.044 $\pm$ 0.007 \\
 &	0.5 &	0.159 $\pm$ 0.003 &	0.191 $\pm$ 0.02 &	0.168 $\pm$ 0.008 &	0.169 $\pm$ 0.007 &	\cellcolor{red!25} 0.037 $\pm$ 0.011 &	\cellcolor{blue!25} 0.052 $\pm$ 0.009 \\
 &	0.7 &	0.18 $\pm$ 0.004 &	0.224 $\pm$ 0.022 &	0.206 $\pm$ 0.013 &	0.207 $\pm$ 0.006 &	\cellcolor{red!25} 0.028 $\pm$ 0.005 &	\cellcolor{blue!25} 0.071 $\pm$ 0.009 \\
\bottomrule
\end{tabular}
\begin{tabular}{l|l|llllll}
\multicolumn{8}{c}{Disease: DIABETES} \\
\midrule
a & k & NAIVE & ExTRA \cite{MaityYurochkinBanerjeeSun2023} & BBSC \cite{LiptonWangSmola2018} & MLLS  \cite{SaerensLatinneDecaestecker2002} & SEES \cite{ChenZahariaZou2024} & CPSM \\
\midrule
0.05 &	0.3 &	0.156 $\pm$ 0.006 &	0.212 $\pm$ 0.148 &	\cellcolor{blue!25} 0.115 $\pm$ 0.002 &	0.118 $\pm$ 0.007 &	0.134 $\pm$ 0.022 &	\cellcolor{red!25} 0.067 $\pm$ 0.013 \\
 &	0.5 &	0.189 $\pm$ 0.008 &	0.279 $\pm$ 0.206 &	0.162 $\pm$ 0.014 &	0.155 $\pm$ 0.009 &	\cellcolor{blue!25} 0.126 $\pm$ 0.018 &	\cellcolor{red!25} 0.07 $\pm$ 0.012 \\
 &	0.7 &	0.212 $\pm$ 0.01 &	0.195 $\pm$ 0.015 &	0.186 $\pm$ 0.014 &	0.194 $\pm$ 0.008 &	\cellcolor{blue!25} 0.111 $\pm$ 0.02 &	\cellcolor{red!25} 0.073 $\pm$ 0.021 \\
\midrule
0.1 &	0.3 &	0.117 $\pm$ 0.005 &	0.125 $\pm$ 0.014 &	0.109 $\pm$ 0.003 &	0.111 $\pm$ 0.005 &	\cellcolor{blue!25} 0.071 $\pm$ 0.013 &	\cellcolor{red!25} 0.05 $\pm$ 0.006 \\
 &	0.5 &	0.149 $\pm$ 0.009 &	0.149 $\pm$ 0.026 &	0.156 $\pm$ 0.013 &	0.155 $\pm$ 0.01 &	\cellcolor{blue!25} 0.061 $\pm$ 0.013 &	\cellcolor{red!25} 0.055 $\pm$ 0.004 \\
 &	0.7 &	0.174 $\pm$ 0.006 &	0.175 $\pm$ 0.017 &	0.189 $\pm$ 0.012 &	0.194 $\pm$ 0.01 &	\cellcolor{red!25} 0.047 $\pm$ 0.008 &	\cellcolor{blue!25} 0.057 $\pm$ 0.003 \\
\midrule
0.2 &	0.3 &	0.105 $\pm$ 0.005 &	0.136 $\pm$ 0.088 &	0.105 $\pm$ 0.005 &	0.107 $\pm$ 0.002 &	\cellcolor{red!25} 0.031 $\pm$ 0.004 &	\cellcolor{blue!25} 0.033 $\pm$ 0.004 \\
 &	0.5 &	0.15 $\pm$ 0.004 &	0.227 $\pm$ 0.185 &	0.158 $\pm$ 0.007 &	0.156 $\pm$ 0.006 &	\cellcolor{blue!25} 0.045 $\pm$ 0.012 &	\cellcolor{red!25} 0.042 $\pm$ 0.006 \\
 &	0.7 &	0.171 $\pm$ 0.007 &	0.165 $\pm$ 0.025 &	0.189 $\pm$ 0.007 &	0.192 $\pm$ 0.013 &	\cellcolor{red!25} 0.036 $\pm$ 0.009 &	\cellcolor{blue!25} 0.041 $\pm$ 0.006 \\
\bottomrule
\end{tabular}
\begin{tabular}{l|l|llllll}
\multicolumn{8}{c}{Disease: KIDNEY} \\
\midrule
a & k & NAIVE & ExTRA \cite{MaityYurochkinBanerjeeSun2023} & BBSC \cite{LiptonWangSmola2018} & MLLS  \cite{SaerensLatinneDecaestecker2002} & SEES \cite{ChenZahariaZou2024} & CPSM \\
\midrule
0.05 &	0.3 &	0.136 $\pm$ 0.004 &	0.112 $\pm$ 0.011 &	0.092 $\pm$ 0.007 &	0.092 $\pm$ 0.009 &	\cellcolor{blue!25} 0.08 $\pm$ 0.013 &	\cellcolor{red!25} 0.047 $\pm$ 0.008 \\
 &	0.5 &	0.16 $\pm$ 0.007 &	0.136 $\pm$ 0.013 &	0.125 $\pm$ 0.005 &	0.128 $\pm$ 0.011 &	\cellcolor{blue!25} 0.088 $\pm$ 0.012 &	\cellcolor{red!25} 0.047 $\pm$ 0.008 \\
 &	0.7 &	0.179 $\pm$ 0.004 &	0.161 $\pm$ 0.011 &	0.145 $\pm$ 0.01 &	0.148 $\pm$ 0.007 &	\cellcolor{blue!25} 0.081 $\pm$ 0.013 &	\cellcolor{red!25} 0.049 $\pm$ 0.003 \\
\midrule
0.1 &	0.3 &	0.097 $\pm$ 0.003 &	0.083 $\pm$ 0.01 &	0.086 $\pm$ 0.005 &	0.084 $\pm$ 0.008 &	\cellcolor{blue!25} 0.04 $\pm$ 0.009 &	\cellcolor{red!25} 0.039 $\pm$ 0.004 \\
 &	0.5 &	0.121 $\pm$ 0.005 &	0.119 $\pm$ 0.019 &	0.123 $\pm$ 0.002 &	0.121 $\pm$ 0.008 &	\cellcolor{blue!25} 0.039 $\pm$ 0.01 &	\cellcolor{red!25} 0.035 $\pm$ 0.006 \\
 &	0.7 &	0.142 $\pm$ 0.005 &	0.151 $\pm$ 0.022 &	0.139 $\pm$ 0.005 &	0.144 $\pm$ 0.007 &	\cellcolor{blue!25} 0.04 $\pm$ 0.001 &	\cellcolor{red!25} 0.033 $\pm$ 0.002 \\
\midrule
0.2 &	0.3 &	0.08 $\pm$ 0.001 &	0.144 $\pm$ 0.151 &	0.081 $\pm$ 0.002 &	0.082 $\pm$ 0.002 &	\cellcolor{red!25} 0.021 $\pm$ 0.002 &	\cellcolor{blue!25} 0.025 $\pm$ 0.005 \\
 &	0.5 &	0.111 $\pm$ 0.004 &	0.097 $\pm$ 0.008 &	0.119 $\pm$ 0.003 &	0.118 $\pm$ 0.003 &	\cellcolor{blue!25} 0.043 $\pm$ 0.008 &	\cellcolor{red!25} 0.03 $\pm$ 0.004 \\
 &	0.7 &	0.131 $\pm$ 0.007 &	0.125 $\pm$ 0.006 &	0.14 $\pm$ 0.006 &	0.141 $\pm$ 0.006 &	\cellcolor{blue!25} 0.033 $\pm$ 0.003 &	\cellcolor{red!25} 0.03 $\pm$ 0.004 \\
\bottomrule
\end{tabular}
\begin{tabular}{l|l|llllll}
\multicolumn{8}{c}{Disease: FLUID} \\
\midrule
a & k & NAIVE & ExTRA \cite{MaityYurochkinBanerjeeSun2023} & BBSC \cite{LiptonWangSmola2018} & MLLS  \cite{SaerensLatinneDecaestecker2002} & SEES \cite{ChenZahariaZou2024} & CPSM \\
\midrule
0.05 &	0.3 &	0.179 $\pm$ 0.004 &	0.16 $\pm$ 0.006 &	0.141 $\pm$ 0.005 &	\cellcolor{blue!25} 0.136 $\pm$ 0.005 &	0.191 $\pm$ 0.009 &	\cellcolor{red!25} 0.106 $\pm$ 0.011 \\
 &	0.5 &	0.221 $\pm$ 0.005 &	0.201 $\pm$ 0.012 &	0.188 $\pm$ 0.008 &	\cellcolor{blue!25} 0.186 $\pm$ 0.01 &	0.23 $\pm$ 0.005 &	\cellcolor{red!25} 0.106 $\pm$ 0.008 \\
 &	0.7 &	0.246 $\pm$ 0.007 &	0.225 $\pm$ 0.01 &	0.223 $\pm$ 0.015 &	\cellcolor{blue!25} 0.219 $\pm$ 0.009 &	0.24 $\pm$ 0.01 &	\cellcolor{red!25} 0.101 $\pm$ 0.014 \\
\midrule
0.1 &	0.3 &	0.141 $\pm$ 0.002 &	0.141 $\pm$ 0.011 &	0.128 $\pm$ 0.004 &	0.129 $\pm$ 0.001 &	\cellcolor{blue!25} 0.116 $\pm$ 0.007 &	\cellcolor{red!25} 0.073 $\pm$ 0.007 \\
 &	0.5 &	0.18 $\pm$ 0.004 &	0.212 $\pm$ 0.088 &	0.187 $\pm$ 0.011 &	0.186 $\pm$ 0.008 &	\cellcolor{blue!25} 0.144 $\pm$ 0.009 &	\cellcolor{red!25} 0.07 $\pm$ 0.015 \\
 &	0.7 &	0.209 $\pm$ 0.005 &	0.204 $\pm$ 0.003 &	0.222 $\pm$ 0.011 &	0.222 $\pm$ 0.011 &	\cellcolor{blue!25} 0.124 $\pm$ 0.013 &	\cellcolor{red!25} 0.078 $\pm$ 0.013 \\
\midrule
0.2 &	0.3 &	0.124 $\pm$ 0.003 &	0.182 $\pm$ 0.093 &	0.127 $\pm$ 0.003 &	0.127 $\pm$ 0.002 &	\cellcolor{red!25} 0.053 $\pm$ 0.003 &	\cellcolor{blue!25} 0.053 $\pm$ 0.005 \\
 &	0.5 &	0.179 $\pm$ 0.005 &	0.191 $\pm$ 0.061 &	0.187 $\pm$ 0.009 &	0.185 $\pm$ 0.003 &	\cellcolor{blue!25} 0.069 $\pm$ 0.013 &	\cellcolor{red!25} 0.058 $\pm$ 0.009 \\
 &	0.7 &	0.212 $\pm$ 0.005 &	0.243 $\pm$ 0.049 &	0.228 $\pm$ 0.007 &	0.232 $\pm$ 0.007 &	\cellcolor{red!25} 0.043 $\pm$ 0.009 &	\cellcolor{blue!25} 0.06 $\pm$ 0.006 \\
\bottomrule
\end{tabular}
\end{table}






\end{appendices}

\end{document}